\documentclass[preprints,article,accept,pdftex,moreauthors]{Definitions/mdpi} 

\usepackage{amsmath}
\usepackage{amssymb}
\usepackage{mathtools}
\usepackage{xcolor}
\usepackage{microtype}
\usepackage{booktabs}
\usepackage{multirow}
\usepackage{siunitx}
\usepackage{makecell}
\usepackage{xr}
\newcommand{\mcref}[1]{\cref{#1}}
\newcommand{\scref}[1]{\cref{#1}}
\usepackage{listings}
\newcommand{\R}{\mathbb{R}}

\DeclareMathOperator{\Tr}{Tr}
\DeclareMathOperator{\diag}{diag}
\newcommand{\W}{\mathcal{W}}
\newcommand{\kernel}{\mathcal{K}}

\newcommand{\xp}[1]{{\widetilde #1}}

\newcommand{\supp}[1]{Supplementary Note~\ref{#1}} 

\newcommand{\etal}{\textit{et al.}}

\newcommand{\emd}{classical OT}
\newcommand{\wax}{$\mathcal{W}$aX}

\firstpage{1} 
\pubvolume{1}
\issuenum{1}
\articlenumber{0}
\pubyear{2026}
\copyrightyear{2026}
\datereceived{ } 
\daterevised{ } 
\dateaccepted{ } 
\datepublished{ } 

\Title{Reliable Modeling of Distribution Shifts via Displacement-Reshaped Optimal Transport}

\newcommand{\mnamelong}{Displacement-Reshaped Optimal Transport}
\newcommand{\mname}{ReshapeOT}

\Author{Philip Naumann $^{1,2}$\orcidA{}, Jacob Kauffmann $^{1,2}$\orcidB{}, Klaus-Robert M\"uller $^{1,2,4,5}$*\orcidC{}, and Gr\'egoire Montavon $^{1,3}$*\orcidD{}}

\AuthorNames{Philip Naumann, Jacob Kauffmann, Klaus-Robert M\"uller, and Gr\'egoire Montavon}

\address{%
$^{1}$ \quad BIFOLD\,--\,Berlin Institute for the Foundations of Learning and Data, 10587 Berlin, Germany\\
$^{2}$ \quad Machine Learning Group, Technische Universit\"at Berlin, 10587 Berlin, Germany\\
$^{3}$ \quad Institute for AI in Medicine, Charit\'e\,--\,Universit\"atsmedizin Berlin, 10117 Berlin, Germany\\
$^{4}$ \quad Department of Artificial Intelligence, Korea University, Anam-dong, Seongbuk-gu, Seoul 02841, Korea\\
$^{5}$ \quad Max Planck Institute for Informatics, Stuhlsatzenhausweg, 66123 Saarbr{\"u}cken, Germany}

\corres{Correspondence: klaus-robert.mueller@tu-berlin.de, gregoire.montavon@charite.de}

\abstract{%
Optimal transport (OT) is a central framework for modeling distribution shifts. Because OT compares distributions directly in input space, a well-designed ground metric between observations is essential to ensure that the optimizer does not violate the true geometry of change.
We propose \mnamelong{} (\mname{}), a method that reshapes the ground metric by integrating observed sample displacements as an additional source of knowledge. Technically, \mname{} replaces the Euclidean metric with a Mahalanobis distance estimated from displacement second moments. This effectively carves expressways through the input space, inviting transport solutions that better align with observed displacements. Our method is computationally lightweight, integrates seamlessly into any OT solver that operates on a cost matrix, and can be kernelized for further flexibility.
Experiments on synthetic and real-world data show that \mname{} achieves substantial gains in transport reliability. We further demonstrate our method's usefulness in two practical use cases.
}%

\keyword{optimal transport, metric learning, distribution shifts, knowledge integration} 

\begin{document}

\section{Introduction}
\label{sec:introduction}

\noindent Modeling distribution shifts is a critical research area (cf.~\cite{DBLP:journals/jmlr/SugiyamaKM07,quionero-candelaDatasetShiftMachine2009}) with applications ranging from characterizing real-world phenomena to developing robust machine learning models. Information geometry (IG)~\cite{Amari2016} provides a rigorous theoretical framework for these shifts by representing probability distributions as points on a Riemannian manifold, where shifts are quantified as geodesic distances. While IG traditionally characterizes shifts in parameter space (cf.~\cite{amari2000methods}), recent work has pivoted toward measuring shifts relative to the distribution's support~\cite{Otto2001}. This perspective draws deep connections to optimal transport~(OT)~\cite{Otto2001,villaniOptimalTransportOld2008, Peyre2019}, facilitating diverse applications such as tracking developmental processes in single-cell data~\cite{schiebinger2019optimal, kleinMappingCellsTime2025} and building robust ML models~\cite{montavon2016wasserstein, courtyOptimalTransportDomain2017, Andeol2023}.

\smallskip

Due to its focus on the distribution's support, OT requires special attention to the function used to measure distance in input space. For instance, using the Euclidean distance to measure distance between sources and targets tacitly assumes that transporting points over a straight line is a feasible and cost-efficient solution. This ignores practical structure: real displacements may follow corridors (e.g., migration routes, vascular pathways), avoid obstacles, or be governed by costs that differ substantially from raw Euclidean distance.

\smallskip

Much of the recent literature on optimal transport has focused on designing improved measures of transport cost to serve as optimization objectives. This includes objectives that integrate the manifold structure into the distance calculation (i.e.~geodesic distances)~\cite{HuguetTZTWK23}, that prevent transport plans from having discontinuities in their source-target mapping (see e.g.~\cite{courtyOptimalTransportDomain2017}), or that force an alignment to the dominant directions of variation in the data~\cite{patySubspaceRobustWasserstein2019}. While these first-principles formulations are highly effective at guiding the transport solution towards the specified behavior, they may lack flexibility in handling the multitude of real-world factors that make up the true transport costs.

\smallskip

Rather than handcrafting a suitable ground metric, we propose \textit{\mnamelong{}}, or, short, \textit{\mname{}}, which learns the metric directly from observed displacement statistics, steering OT toward empirically grounded solutions and away from naive Euclidean extrapolations.
Technically, our approach estimates second-order statistics of past displacements to derive a new, Mahalanobis-type ground metric that carves `expressways' into the original Euclidean ground metric. Our method easily integrates with existing OT solvers and incurs no significant increase in computational cost.
Because \mname{} only modifies the cost matrix and leaves the transport problem's feasible set intact, its solutions take the same form as classical OT, preserving full compatibility with downstream tools for transparency and explainability, such as \wax{}~\cite{Naumann2026}.

\smallskip

Through experiments on synthetic and real-world data, we demonstrate that our method learns OT solutions that improve upon those of classical OT. Specifically, we observe that our method is less susceptible to falling for implausible shortcuts in the input space. Our approach is sample-efficient, requiring only a limited number of carefully selected displacement instances to steer the OT solution toward more plausible outcomes, as we demonstrate in the case of retrieving migratory bird trajectories. Finally, our method also provides a foundation for higher performance in downstream applications, as we demonstrate in a domain adaptation use case.

\section{Related Work}
\label{sec:related-work}

\noindent Our work connects to two strategies for shaping OT solutions: constraining the \emph{coupling} (\cref{sec:rw-coupling}), or adapting the \emph{ground metric} (\cref{sec:rw-geom,sec:rw-gml}). We refer to~\cite{villaniOptimalTransportOld2008,Peyre2019} for a broader background on optimal transport in general.

\subsection{Constraining the Coupling}
\label{sec:rw-coupling}

One strategy alters the OT objective or feasible set so the coupling reflects structural priors. Spatial and graph-Laplacian regularizers encourage smooth, neighborhood-preserving plans~\cite{DBLP:journals/siamis/FerradansPPA14,courtyOptimalTransportDomain2017}. Schr\"odinger Bridges instead constrain the stochastic \emph{path}.
Theodoropoulos~\etal{}~\cite{DBLP:conf/iclr/TheodoropoulosK25}, for example, use pre-aligned pairs as feedback to guide unpaired transport, but alter both the objective and feasible set rather than the cost. Anchor-point methods restrict the coupling itself: Latent OT~\cite{linMakingTransportMore2021} forces transport through learned anchors for outlier-robust low-rank couplings; Sato~\etal{}~\cite{DBLP:journals/corr/abs-2002-01615} compare points via 1D pairwise-distance distributions; and Gu~\etal{}~\cite{DBLP:conf/nips/0005YZSX22} use keypoint-derived masks for heterogeneous domain adaptation.

In contrast, our \mname{} method imposes \emph{no} structural constraints on the plan, its path, its feasible set, or its regularization reference; it only modifies the cost matrix. This makes it interoperable with any cost-matrix-based OT solver.

\subsection{Hand-Designed and Robust Geometric Metrics}
\label{sec:rw-geom}

A complementary strategy replaces the Euclidean distance with one reflecting the data's intrinsic or worst-case geometry. Huguet~\etal{}~\cite{HuguetTZTWK23} compute Wasserstein distances under geodesic ground metrics via entropic OT, encoding manifold structure in the cost. Algebraically closer to ours, Paty~\&~Cuturi~\cite{patySubspaceRobustWasserstein2019} introduce Subspace Robust Wasserstein Distances, whose convex relaxation hinges on the second-moment matrix of displacements.

Such hand-designed or adversarially selected metrics are effective when the relevant geometry is known a priori or robustness is the goal, but they do not adapt to observed displacements. Both \emph{increase} transport costs, via topological constraints or adversarial projections, whereas \mname{} \emph{decreases} costs along directions empirically supported by observed displacements: it carves expressways rather than adding barriers (cf.~\cref{sec:geometric-interpretation}).

\subsection{Ground Metric Learning}
\label{sec:rw-gml}

Closest to our setting is \emph{ground metric learning}, where the cost is itself learned from data.
Several works learn a Mahalanobis cost for OT. Cuturi~\&~Avis~\cite{DBLP:journals/jmlr/CuturiA14} learn from class-labeled histograms via a subgradient procedure; Kerdoncuff~\etal{}~\cite{DBLP:conf/ijcai/KerdoncuffES20} establish a PCA--Wasserstein connection, and learn a ground metric to reduce target mismatch for domain adaptation; and Jawanpuria~\etal{}~\cite{DBLP:conf/icassp/JawanpuriaSMG25} jointly learn the OT plan and a latent ground metric via Riemannian alternating optimization.
Although the Mahalanobis form closely resembles ours, all three methods rely on label supervision or unpaired global statistics and require iterative optimization. \mname{} differs by deriving its metric in closed form from the second-order statistics of (possibly very few) observed source--target displacements, without label information and without alternating updates.
Lastly, we note that the recipe of inverting a second-moment matrix, as done by \mname{}, has roots in classical metric learning~\cite{DBLP:conf/nips/GoldbergerRHS04,DBLP:conf/icml/DavisKJSD07,DBLP:journals/jmlr/WeinbergerS09}, notably in Relevant Component Analysis (RCA)~\cite{DBLP:journals/jmlr/Bar-HillelHSW05}, which uses within-class covariances of equivalence-constrained ``chunklets''.
With \mname{}, we propose a method that brings this idea to OT.

A complementary line of work learns more expressive ground metrics, such as graph-geodesic, Riemannian, or neural, from observations of evolving densities or trajectories. Heitz~\etal{}~\cite{DBLP:journals/jmiv/HeitzBCCP21} parametrize the metric as graph edge weights and fit it by reconstructing intermediate density snapshots as Wasserstein geodesics: closest to ours in motivation, but requiring full intermediate marginals, graph-geodesic restrictions, and iterative diffusion-based optimization, whereas \mname{} only needs paired source--target observations and is closed-form. Scarvelis~\&~Solomon~\cite{DBLP:conf/iclr/Scarvelis023} learn a spatially-varying Riemannian metric from cross-sectional snapshots via a neural network; Pooladian~\etal{}~\cite{DBLP:conf/uai/PooladianDCA24} extend this to deterministic maps via a Lagrangian with obstacles and non-Euclidean geometries through amortized path optimization; and Kapusniak~\etal{}~\cite{DBLP:conf/nips/KapusniakPRZ0BB24} use data-dependent Riemannian metrics in metric flow matching to avoid straight-line shortcuts.
Closest to our supervision regime, OT-SI~\cite{DBLP:conf/iclr/LiuBZ20} learns a parametric OT cost from \emph{subset correspondences} by differentiating through Sinkhorn~\cite{cuturiSinkhornDistancesLightspeed2013} with pair matching being its extreme unit-sized case.
OT-SI and \mname{} share the goal of leveraging paired side information to shape the transport cost, but differ in approach: OT-SI fits a parametric cost via end-to-end gradient flow through the OT solver, trading analytical simplicity for flexibility, while \mname{} inverts a single moment statistic in closed form. The two methods, therefore, occupy complementary points, and we view them as compatible rather than competing.
Perrot~\etal{}~\cite{DBLP:conf/nips/PerrotCFH16} jointly learn a coupling and a parametric transport map from source--target pairs for out-of-sample transport.
Finally, Inverse OT~\cite{DBLP:journals/siamam/StuartW20,DBLP:conf/iclr/AndradePP24} addresses the reverse problem of inferring the cost from samples and a \emph{known} coupling, and Meta-OT~\cite{DBLP:conf/icml/AmosLCR23} amortizes solutions across related problems.

In contrast to these neural, graph-based, and iteratively-learned formulations, our \mname{} method requires no learned model, yields a closed-form ground metric, and directly leverages a small set of source--target pairings, whereas most prior methods rely on unlinked samples, full intermediate marginals, class labels, or end-to-end alignment objectives.
We also do not require additional information beyond the observed displacements to infer the adapted cost.
Hence, by modifying only the cost matrix, \mname{} further retains full compatibility with classical OT solvers.

\section{Problem Formulation and Proposed Method}
\label{sec:method}

\begin{figure}[t!]
    \centering
  \makebox[\textwidth][c]{
    \includegraphics[width=1.25\textwidth]{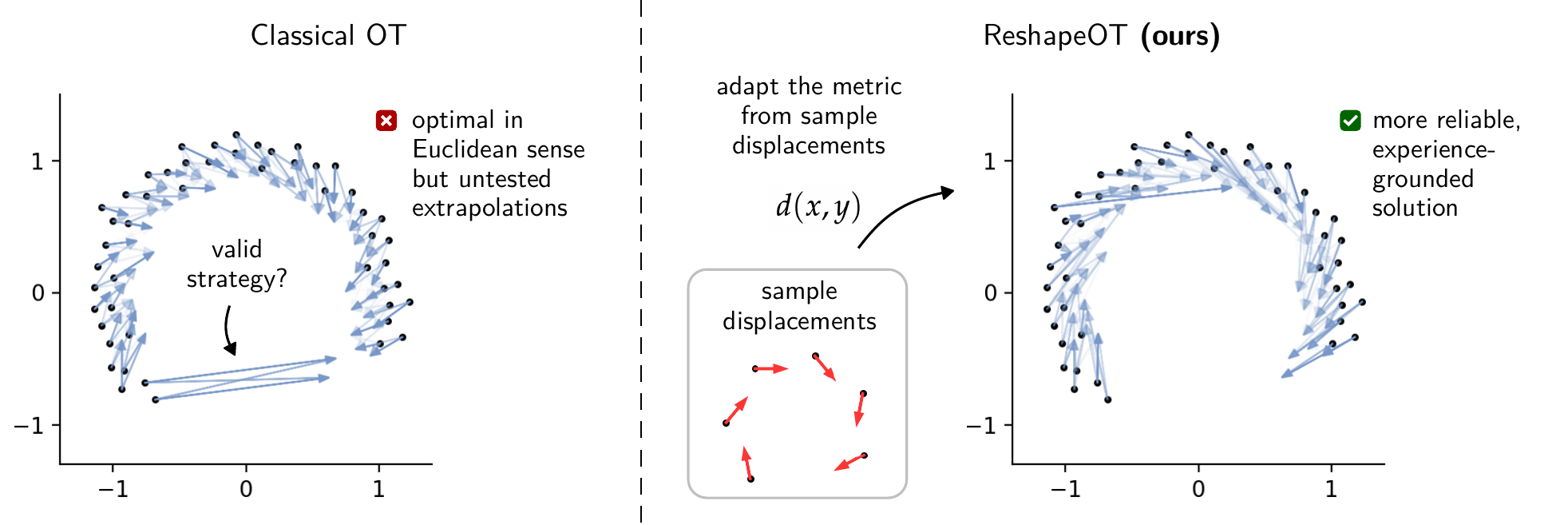}
    }
    \caption{Overview of our approach to influence the OT solution through ground-truth displacements. The classical OT solution (here with squared Euclidean costs) contains points that are spuriously transported across the manifold. In contrast, our proposed \mname{} method carves the original Euclidean distance into a new distance with lower associated costs along the displacements. This results in a more reliable, guided solution.}
    \label{fig:intro}
\end{figure}

\noindent We recall the classical OT formulation: given a source distribution $\mu$ and a target distribution $\nu$ on $\R^d$, the Kantorovich primal problem seeks a cost-minimizing coupling $\gamma\in\Gamma(\mu,\nu)$, a joint distribution on $\R^d \times \R^d$ with marginals $\mu$ and $\nu$
(cf.~\cite{villaniOptimalTransportOld2008,Peyre2019}):
\begin{align}
    \inf_{\gamma \in \Gamma(\mu,\nu)} \mathbb{E}_{(x,y) \sim \gamma} [c(x,y)] 
    \label{eq:ot}
\end{align}
where $c\colon \R^d \times \R^d \to \R_+$ is a cost function.
With finite samples from $\mu$ and $\nu$, $\gamma$ reduces to a non-negative matrix whose row and column sums match the empirical marginals.
This classical formulation has limitations in practice; in particular, we rarely know the true cost function, or the latter is impractical to compute. In practice, one often resorts to heuristic or technical cost functions, where a popular choice builds on the Euclidean ground metric, namely, $c(x,y) = \|x-y\|^2$, which is associated with the classical $\mathcal{W}_2$-Wasserstein transport problem.

The OT problem is a natural lens for studying distribution shift: $\mu$ and $\nu$ describe the data distribution before and after a shift, the coupling $\gamma$ encodes how mass is redistributed, and the cost function $c$ governs which movements are favored. The ground metric, therefore, directly determines what kinds of shifts the model can capture: a too rudimentary ground metric might invite the exploitation of shortcuts that are physically implausible, whereas an informed metric can reflect the true geometry of the distribution shift.

\medskip

We propose an alternative formulation to modeling distribution shifts using OT, where we assume access to an additional source of knowledge in the form of $\xp{N}$ pairings $(\xp{x_n},\xp{y_n})_{n=1}^\xp{N}$, which we refer to as \textit{sample displacements}.
These instances may be available for various reasons in practice. 
For example, they can arise from active tracking of a data subset (cf.~\cite{DBLP:conf/iclr/LiuBZ20}), from past transport phenomena (cf.~\cite{DBLP:conf/icassp/JawanpuriaSMG25}), or from manually defined or OT-generated virtual pairs in a controlled setting.
To incorporate such sample displacements, we now introduce our proposed method \textit{\mname{}}, which takes a classical OT formulation with Euclidean ground metric as a starting point and performs the following three conceptual steps:

\begin{description}
\item[Step 1 (Compute Displacement Statistics)]
The method starts by summarizing the available ground-truth displacements as a second-moment matrix $\Sigma$.
For the sake of generality, and to account for a broader set of usages, such as integrating knowledge from a general OT solution, we replace the sample of paired points by two sets of data points $(\xp{x_k})_{k=1}^\xp{M}$ and $(\xp{y_l})_{l=1}^\xp{N}$ linked through some coupling matrix $\xp{\gamma}$. (The special case of a displacement sample can be recovered by setting $\xp{M}=\xp{N}$ and limiting $\xp{\gamma}$ to permutation matrices.) Using the general formulation, we build $\Sigma$ as:
\begin{align}
\Sigma = \sum_{kl} \xp{\gamma_{kl}}(\xp{x_k}-\xp{y_l})(\xp{x_k}-\xp{y_l})^\top 
\label{eq:Sigma}
\end{align}
The leading eigenvectors of the matrix $\Sigma$ span the subspace of observed displacements. Simple global translations in input space are spanned well by a single leading eigenvector. More complex phenomena, like rotations, require more of them. 

\item[Step 2 (Ground Metric Redefinition)]
The method proceeds by defining a new cost function informed by the sample displacements, represented by the second-moment matrix $\Sigma$ calculated above. Specifically, we replace the original Euclidean metric with the Mahalanobis distance:
\begin{align}
d(x,y)
= \sqrt{(x-y)^\top (I + \eta\Sigma)^{-1} (x-y)} 
\label{eq:mahalanobis}
\end{align}
where $\eta \geq 0$ controls the degree to which the displacements shape the metric. At $\eta\!=\!0$, the expression reduces to the standard Euclidean distance. Increasing $\eta$ progressively flattens the metric along directions of observed displacement, making travel in those directions cheaper.

\item[Step 3 (Recalculation of the OT Solution)] As a last step, an OT solver of choice (e.g., \cref{eq:ot}) is applied to the costs derived from the new ground metric ($c(x,y) = d(x,y)^2$). This leads to a new solution for the coupling between the source and target that better aligns with the observed displacements, as illustrated in \cref{fig:intro}.
\end{description}
\noindent Technically, \mname{} is a recalculation of the classical OT problem under a new ground metric. It introduces no additional structural constraints on the transport plan and no significant computational overhead. The only new hyperparameter is $\eta$, which can be selected via cross-validation or set based on the desired strength of guidance. In \cref{sec:exp-ot-implementation-details}, we propose a heuristic to make setting it scale-invariant.

\subsection{Geometric Interpretation of \mname{} and Relation to $\mathcal{W}_2$}
\label{sec:geometric-interpretation}
\noindent In the following, we analyze the displacement-informed transport costs produced by \mname{}. Specifically, observing that $\Sigma$ is positive semi-definite, we can show that $d(x,y) \leq \|x-y\|$, with equality when $\eta\!=\!0$ (no influence) or $(x-y)\in\ker(\Sigma)$ (orthogonal to all given displacements). This relation extends to the distance between distributions, allowing us to show that \mname{} costs resulting from minimizing \cref{eq:ot} with the new costs are upper-bounded by the canonical $\mathcal{W}_2$-Wasserstein distance:
\begin{align}
    \inf_{\gamma \in \Gamma(\mu,\nu)} \mathbb{E}_{(x,y) \sim \gamma} [d(x,y)^2] \leq \W_2^2(\mu, \nu)
    \label{eq:expressway}
\end{align}
Here again, with equality when $\eta\!=\!0$ or when every displacement under $\W_2$ lies in $\ker(\Sigma)$, i.e.\ transport entirely orthogonal to the subspace of sample displacements.
The above inequality also shows that \mname{} strictly reduces the transport costs relative to $\W_2$, carving `expressways' through regions of high observed displacement. 
This stands in contrast to the more common use of geodesic distances (e.g.,~\cite{BonetDC25}), where the transport plan is instead reshaped by restricting the set of possible trajectories. As a corollary, we can show that \mname{} costs are equivalent to the squared $\mathcal{W}_2$-Wasserstein distance with a Euclidean ground metric applied to reparametrized marginals. Specifically, the cost equals $\W_2^2(T_{\#}\mu, T_{\#}\nu)$ with $T(x) = Wx$ and $W$ is any matrix satisfying $W^\top W = (I + \eta\Sigma)^{-1}$. Any such matrix $W$ (e.g., PCA, ZCA, or Cholesky) yields the same transport distance. This mapping rescales the displacement directions: along the $i$-th eigenvector of $\Sigma$, distances shrink by a factor $\sqrt{1/(1 + \eta\lambda_i)}$, i.e.\ directions of large observed displacement become cheap compared to directions that are not spanned by these observations.

\subsection{Extending \mname{} to Kernel Feature Spaces}
\label{sec:kernel-extension}

\noindent The ground metrics developed in \cref{sec:method} can globally reshape transport costs but have limits in capturing nonlinear transportation corridors found in real-world phenomena such as bird migration or blood circulation. To handle such cases, we extend \mname{} to kernel feature spaces~\cite{scholkopf1998nonlinear,DBLP:journals/tnn/MullerMRTS01,Schoelkopf2002book,DBLP:journals/pami/ZhangWN20}. Concretely, let $\kernel\colon\mathbb{R}^d\times\mathbb{R}^d \to \mathbb{R}$ be a positive semi-definite kernel, which subsumes some nonlinear mapping $x \mapsto \Phi(x)$ to some feature space, and in which distance calculations are equivalently expressible as kernel evaluations:
\begin{align*}
\|\Phi(x)-\Phi(y)\| = \sqrt{\kernel(x,x) - 2\kernel(x,y) + \kernel(y,y)}
\end{align*}
Considering now the \mname{} ground metric in \cref{eq:mahalanobis} but replacing any inner product or distance computation by the same operations in feature space, we can recover an expression of the distance that depends on the inputs only through kernel evaluations on the query pair $(x,y)$ and the individual source and targets from the sample $\boldsymbol{z} = (\xp{x_1}, \ldots, \xp{x_M}, \xp{y_1},\ldots,\xp{y_N})$. Specifically, we get the kernelized form of the \mname{} ground metric:
\begin{align}
d(\Phi(x),\Phi(y))
  = \sqrt{
  \begin{aligned}
  \kernel(x,x) &- 2\kernel(x,y) + \kernel(y,y)\\
  &- (\kernel(x,\boldsymbol{z}) - \kernel(y,\boldsymbol{z})) \, \Lambda \, (\kernel(\boldsymbol{z}, x) - \kernel(\boldsymbol{z}, y))
  \end{aligned}
  }
  \label{eq:kernel-distance}
\end{align}
with $\Lambda = \eta A^{1/2}(I + \eta A^{1/2} \kernel(\boldsymbol{z},\boldsymbol{z}) A^{1/2})^{-1} A^{1/2}$, where $A = (\diag(G\mathbf{1}) - G)$ is the weighted Laplacian of the coupling matrix $G = (0 , \xp{\gamma}; \xp{\gamma}^\top , 0)$, and where we have used $\mathcal{K}$ to also denote vectors or matrices of kernel computations. (A proof of \cref{eq:kernel-distance} is provided in Appendix~\ref{app:proof-kernelization}). Furthermore, we can verify that the inequalities presented in \cref{sec:geometric-interpretation} and their interpretations also hold for these kernelized distances and their associated Wasserstein distances.

\section{Experiments}
\label{sec:experiments}

\noindent In the following experiments, we evaluate \mname{}'s ability to recover ground-truth transport in synthetic and real-world settings.

\subsection{Datasets}
\label{sec:datasets}
Two of our three datasets are multivariate time series. \textit{Air Quality}~\cite{air_quality_360} records pollutant concentrations and meteorological variables in an Italian city at hourly intervals from March 2004 to February 2005. 
After preprocessing, it retains~9 features, with a median of~368 samples per shift in each source and target domain.
\textit{Appliances}~\cite{appliances_energy_prediction_374} tracks appliance energy consumption in a low-energy building via sensor measurements taken every 10 minutes over~4.5 months.
To match the hourly resolution of \textit{Air Quality}, we resample the records to hourly intervals using median aggregation.
After preprocessing, it retains~24 features, with a median of~133 samples per shift in each source and target domain.
From each dataset, we create subsets with temporal shifts of 1--6 hours by pairing a source (all samples across the full time series) with a target consisting of the same days shifted by the given delay, naturally inducing a ground-truth identity coupling.
Further details on both datasets and their preprocessing are provided in \supp{app:A-time-series-datasets}.
Lastly, we consider the synthetic \textit{Rotating Moons} problem, widely used in domain adaptation research~\cite{courtyOptimalTransportDomain2017}.
It consists of two nested 2D moon-shaped point clouds (150 samples each) representing binary classes; target domains are obtained by rotating these clouds counterclockwise by increasing angles (10$^\circ$\,--\,90$^\circ$), with larger rotations producing more severe distribution shifts (cf.\ \cref{fig:usecase_two,sec:use_case_da}).
The dataset generation script is provided in \supp{app:B-moons-generation}.

\subsection{Evaluation Metrics}
\label{sec:evaluation_metrics}

Our first evaluation assesses the quality of the coupling learned by \mname{} by comparing it against a ground-truth coupling derived from a fully tracked temporal phenomenon. To this end, we measure the mean error between ground-truth and predicted displacements.
In this setting, all methods have access to the complete set of pre-aligned point correspondences, i.e.~the ground-truth coupling is a normalized identity matrix. For each source point $x_k$, we apply the barycentric mapping induced by $\gamma$ to compute its predicted transport target $\hat{y}_k = T(x_k)$~\cite{DBLP:journals/siamis/FerradansPPA14,courtyOptimalTransportDomain2017}. We then evaluate the mean \textit{transport error} between the true target $y_k$ and its prediction $\hat{y}_k$, defined as $\mathbb{E}_k\big[\|y_k - \hat{y}_k\|\big]$. A perfect coupling, i.e.~one that exactly recovers the ground-truth pairing, yields $\hat{y}_k = y_k$ and a score of zero.

We perform an additional evaluation of \mname{} similar to the one originally proposed in~\cite{Naumann2026}, focusing on the transport model's ability to retrieve features relevant to the studied phenomena. This extrinsic evaluation enables a broader comparison with non-OT methods.
Given paired observations $(x_m,\, y_m)_{m=1}^M$, we obtain the ground-truth shift attributions as $R_i^\star = \mathbb{E}_m\big[ (x_{m,i} - y_{m,i})^2 \big]$.
We then split the original data into two unpaired subsets, apply each method to each subset to produce predicted attributions, and average the cosine similarities to the ground-truth attribution across the two splits.
For methods that output a coupling matrix $\gamma$, we compute the predicted per-feature attributions $R_i$ via \wax{}~\cite{Naumann2026} (with $p,q=2$), which decomposes the Wasserstein distance into non-negative feature contributions satisfying $\sum_i R_i = \mathcal{W}_2^2(\mu,\nu)$.
All other methods directly yield an attribution without the need for a post-hoc explainer.

\medskip

\subsection{\mname{} Implementation Details}
\label{sec:exp-ot-implementation-details}
We evaluate \mname{} using a linear kernel, $\kernel(x,y)\!=\!\langle x, y \rangle$, and using an RBF kernel, $\kernel(x,y)\!=\!\exp(-\lambda \|x-y\|^2)$.
In our experiments, we set $\eta = \alpha / \Tr(\Sigma)$, where $\alpha \geq 0$ is a dimensionless scale parameter. This normalization makes $\eta\Sigma$ scale-invariant with respect to the magnitude of the observed displacements, so that $\alpha$ has a consistent interpretation across datasets and kernels: $\alpha\!=\!0$ disables the displacement prior, while values of $\alpha \gg 1$ correspond to strong guidance along the directions captured by $\Sigma$.
While this removes the need to precisely set $\eta$, the $\alpha$ parameter still remains problem-dependent. If a task requires stricter guidance, $\alpha$ needs to be increased accordingly.

For the time series datasets, we select $\xp{N}\!=\!5$ random pairs from the ground-truth coupling to build the displacement and remove them from the remaining data at each experiment iteration, and use $\lambda\!=\!0.0005$ for the RBF kernel and $\alpha\!=\!10^2$ for the regularization.
For the \textit{Rotating Moons} dataset, we assemble a sample of displacements of size $\xp{N}\!=\!40$, where each instance is selected randomly. When applying \mname{} to this dataset, we use an RBF kernel and $\alpha\!=\!10^4$ to ensure an efficient capture of the information contained in the sample displacements, and use and integration strength of $\lambda\!=\!2$.
These hyperparameters are based on the best-performing results of a hyperparameter sweep provided in \supp{app:C-hyperparameter-selection}.
Furthermore, we show ablations of \mname{} that replace the coupling matrix $\xp{\gamma}$ by a randomly permuted version $\xp{\gamma_\mathrm{rng}}$, allowing us to test if performance improvement derives from exploiting the exact linking of sources or targets in the sample or their overall variance.

Note that \wax{} assumes a Minkowski metric, hence we use \mname{} only as a regularizer here to find $\gamma$ based on the reshaped ground metric, but evaluate the feature attributions using this coupling and the squared Euclidean cost function (corresponding to \wax{} with $p,q\!=\!2$).
We do not evaluate the \textit{Rotating Moons} dataset on the cosine metric, as it is too low-dimensional to yield meaningful insights.

\subsection{Results}
The results in \cref{tab:results_cos,tab:results_coupling} show averages over 20 random trials: for each time-pair and rotation, we repeated the experiment~20 times with a different random seed, thereby changing the splits and selected sample displacements each time.
Furthermore, \cref{tab:hp_nexp} presents an ablation over the number of utilized displacement pairs ($\xp{N}$) for the time series datasets, where $\xp{N}\!=\!0$ corresponds to \emd{} in RBF kernel space.

In \cref{tab:results_coupling}, we show the results for how well the induced barycentric transport map aligns with the ground-truth coupling.
We see that \emd{}, the unregularized solution of solving \cref{eq:ot} with squared Euclidean cost, performs worse than \mname{}, but better than Sinkhorn, which is an entropy-regularized variant of \cref{eq:ot}~\cite{cuturiSinkhornDistancesLightspeed2013} and yields a stochastic transport plan.
This discrepancy between \emd{} and Sinkhorn is likely due to the ground-truth transport being non-stochastic, which gives an advantage to methods that yield a permutation-like coupling (\emd{} and \mname{}).
Overall, \mname{} outperforms all other baselines by a margin, clearly demonstrating the guidance effect of using displacement information.
In the \textit{Rotating Moons} dataset, the random sample displacement ablations fail to provide useful information, as the raw second-moment statistics without informative guidance are insufficient to capture the nature of the rotation shift.
In this case, the data manifold has a more complex shape, and the locality of insightful displacements is required to accurately guide OT to transport along this manifold.
The experimental results in \cref{sec:use_case_da} emphasize this behavior in the case of a domain adaptation task.
Furthermore, the more complex manifold shapes here require a nonlinear kernel for accurate modeling.
Hence, we see that \mname{} with an RBF kernel outperforms the linear kernel, achieving nearly perfect alignment.

\begin{table}[H]
  \centering
  \caption{%
  Mean transport errors\,$\downarrow$ at different shift levels.
  The ablations of \mname{} use the same parametrization as their main counterparts.
  The values are the mean $\pm$ std.\ deviation. \textbf{Bold} = best, \textit{italic} = 2nd best, ranked within row.
  }%
  \label{tab:results_coupling}
    \resizebox{\textwidth}{!}{%
  \LARGE
  \begin{tabular}{llccccccc}
    \toprule
     &  & \multicolumn{3}{c}{Baselines} & \multicolumn{2}{c}{\textbf{Ours}} & \multicolumn{2}{c}{Ablations} \\
    \cmidrule(lr){3-5} \cmidrule(lr){6-7} \cmidrule(lr){8-9}
    Dataset & Shift & Classical & Sinkhorn & Sinkhorn & \mname{} & \mname{} & \mname{} & \mname{} \\
     &  & OT & {\Large($\varepsilon\!=\!0.05$)} & {\Large($\varepsilon\!=\!0.1$)} & {\Large(Linear)} & {\Large(RBF)} & {\Large(Linear, $\xp{\gamma_\mathrm{rng}}$)} & {\Large(RBF, $\xp{\gamma_\mathrm{rng}}$)} \\
    \midrule
    \multirow{6}{*}{Air Quality} & 1h & $0.44_{\pm 0.17}$ & $1.08_{\pm 0.16}$ & $1.32_{\pm 0.16}$ & $\mathit{0.20_{\pm 0.11}}$ & $\mathbf{0.20_{\pm 0.11}}$ & $0.36_{\pm 0.15}$ & $0.35_{\pm 0.16}$ \\
     & 2h & $0.78_{\pm 0.27}$ & $1.22_{\pm 0.21}$ & $1.42_{\pm 0.20}$ & $\mathit{0.56_{\pm 0.23}}$ & $\mathbf{0.55_{\pm 0.23}}$ & $0.73_{\pm 0.27}$ & $0.73_{\pm 0.28}$ \\
     & 3h & $0.99_{\pm 0.34}$ & $1.32_{\pm 0.24}$ & $1.49_{\pm 0.22}$ & $\mathit{0.80_{\pm 0.29}}$ & $\mathbf{0.80_{\pm 0.29}}$ & $0.96_{\pm 0.33}$ & $0.96_{\pm 0.34}$ \\
     & 4h & $1.14_{\pm 0.39}$ & $1.39_{\pm 0.26}$ & $1.54_{\pm 0.23}$ & $\mathit{0.96_{\pm 0.30}}$ & $\mathbf{0.95_{\pm 0.30}}$ & $1.12_{\pm 0.34}$ & $1.11_{\pm 0.34}$ \\
     & 5h & $1.26_{\pm 0.40}$ & $1.46_{\pm 0.26}$ & $1.58_{\pm 0.22}$ & $\mathit{1.07_{\pm 0.29}}$ & $\mathbf{1.07_{\pm 0.30}}$ & $1.24_{\pm 0.34}$ & $1.23_{\pm 0.35}$ \\
     & 6h & $1.36_{\pm 0.37}$ & $1.51_{\pm 0.25}$ & $1.62_{\pm 0.21}$ & $\mathit{1.18_{\pm 0.26}}$ & $\mathbf{1.17_{\pm 0.26}}$ & $1.33_{\pm 0.33}$ & $1.32_{\pm 0.33}$ \\
    \midrule
    \multirow{6}{*}{Appliances} & 1h & $0.25_{\pm 0.18}$ & $1.84_{\pm 0.11}$ & $2.35_{\pm 0.10}$ & $\mathit{0.06_{\pm 0.11}}$ & $\mathbf{0.06_{\pm 0.11}}$ & $0.21_{\pm 0.22}$ & $0.18_{\pm 0.20}$ \\
     & 2h & $0.76_{\pm 0.31}$ & $2.02_{\pm 0.15}$ & $2.45_{\pm 0.12}$ & $\mathbf{0.24_{\pm 0.25}}$ & $\mathit{0.25_{\pm 0.25}}$ & $0.65_{\pm 0.44}$ & $0.58_{\pm 0.39}$ \\
     & 3h & $1.11_{\pm 0.40}$ & $2.13_{\pm 0.16}$ & $2.51_{\pm 0.13}$ & $\mathbf{0.45_{\pm 0.37}}$ & $\mathit{0.46_{\pm 0.37}}$ & $1.04_{\pm 0.57}$ & $0.93_{\pm 0.50}$ \\
     & 4h & $1.40_{\pm 0.40}$ & $2.21_{\pm 0.16}$ & $2.56_{\pm 0.14}$ & $\mathbf{0.70_{\pm 0.45}}$ & $\mathit{0.72_{\pm 0.44}}$ & $1.40_{\pm 0.61}$ & $1.28_{\pm 0.54}$ \\
     & 5h & $1.63_{\pm 0.35}$ & $2.26_{\pm 0.16}$ & $2.59_{\pm 0.13}$ & $\mathbf{0.96_{\pm 0.50}}$ & $\mathit{0.98_{\pm 0.48}}$ & $1.70_{\pm 0.58}$ & $1.59_{\pm 0.53}$ \\
     & 6h & $1.82_{\pm 0.30}$ & $2.31_{\pm 0.15}$ & $2.62_{\pm 0.13}$ & $\mathbf{1.26_{\pm 0.49}}$ & $\mathit{1.27_{\pm 0.47}}$ & $1.98_{\pm 0.51}$ & $1.86_{\pm 0.48}$ \\
    \midrule
    \multirow{5}{*}{Rotating Moons} & 10$^\circ$ & $\mathit{0.04_{\pm 0.01}}$ & $0.22_{\pm 0.01}$ & $0.34_{\pm 0.01}$ & $0.04_{\pm 0.01}$ & $\mathbf{0.04_{\pm 0.01}}$ & $0.15_{\pm 0.03}$ & $0.31_{\pm 0.05}$ \\
     & 30$^\circ$ & $0.34_{\pm 0.01}$ & $0.36_{\pm 0.00}$ & $0.43_{\pm 0.00}$ & $\mathit{0.24_{\pm 0.01}}$ & $\mathbf{0.09_{\pm 0.02}}$ & $0.56_{\pm 0.07}$ & $0.88_{\pm 0.09}$ \\
     & 50$^\circ$ & $0.69_{\pm 0.01}$ & $0.62_{\pm 0.01}$ & $0.60_{\pm 0.01}$ & $\mathit{0.51_{\pm 0.02}}$ & $\mathbf{0.06_{\pm 0.01}}$ & $0.91_{\pm 0.05}$ & $1.04_{\pm 0.15}$ \\
     & 70$^\circ$ & $1.03_{\pm 0.01}$ & $0.92_{\pm 0.01}$ & $0.85_{\pm 0.01}$ & $\mathit{0.78_{\pm 0.03}}$ & $\mathbf{0.08_{\pm 0.03}}$ & $1.15_{\pm 0.08}$ & $1.16_{\pm 0.10}$ \\
     & 90$^\circ$ & $1.33_{\pm 0.01}$ & $1.21_{\pm 0.01}$ & $1.11_{\pm 0.01}$ & $\mathit{1.06_{\pm 0.03}}$ & $\mathbf{0.10_{\pm 0.02}}$ & $1.37_{\pm 0.08}$ & $1.18_{\pm 0.11}$ \\
    \bottomrule
  \end{tabular}}
\end{table}

\medskip

In \cref{tab:results_cos} we show the feature attribution evaluation results.
Here, \emd{} and Sinkhorn also use \wax{} to compute feature attributions.
Additionally, we added the \textit{Constant Shift} ($R_i=1$) and \textit{Mean Shift} ($R_i = (\mathbb{E}[x] - \mathbb{E}[y])^2_i$) baselines from~\cite{Naumann2026}, which directly provide such attributions without an intermediate coupling step.
We see that \mname{} achieves the highest attribution similarities across all methods, especially improving on the unguided OT formulations \emd{} and Sinkhorn.
There is no clear difference between using a linear or an RBF kernel for these datasets, as the transport processes are likely driven by a clear trend rather than shifting along a complex manifold.
Relative performance across methods generally aligns with the results from \cref{tab:results_coupling}, with the exception that Sinkhorn outperforms \emd{} in the \textit{Appliances} dataset. Since this evaluation abstracts away from the exact coupling, the Sinkhorn regularization helps estimate more robust attributions that reflect the underlying shift trend.
Notably, \mname{} with random sample displacements still outperforms \emd{} and Sinkhorn in the \textit{Air Quality} dataset. This is because the random coupling preserves global second-moment structure, information about the relative spread of source and target that \emd{} must infer entirely from the marginals. When the distribution shift has a dominant constant trend, this global shape information is sufficient to improve the transport.
As in \cref{tab:results_coupling}, Sinkhorn outperforms \emd{} in the \textit{Appliances} dataset, and now also beats the \mname{} ablations.

\begin{table}[H]
  \centering
  \caption{%
  Cosine similarities\,$\uparrow$ between the predicted and ground-truth shift attributions.
  The ablations of \mname{} use the same parametrization as their main counterparts.
  The values are the mean $\pm$ std.\ deviation. \textbf{Bold} = best, \textit{italic} = 2nd best, ranked within row.
  }%
  \label{tab:results_cos}
    \resizebox{\textwidth}{!}{%
  \LARGE
  \begin{tabular}{llcccccccc}
    \toprule
     &  & \multicolumn{4}{c}{Baselines} & \multicolumn{2}{c}{\textbf{Ours}} & \multicolumn{2}{c}{Ablations} \\
    \cmidrule(lr){3-6} \cmidrule(lr){7-8} \cmidrule(lr){9-10}
    Dataset & Shift & Constant & Mean & Classical & Sinkhorn & \mname{} & \mname{} & \mname{} & \mname{} \\
     &  & Shift & Shift & OT & {\Large($\varepsilon\!=\!0.05$)} & {\Large(Linear)} & {\Large(RBF)} & {\Large(Linear, $\xp{\gamma_\mathrm{rng}}$)} & {\Large(RBF, $\xp{\gamma_\mathrm{rng}}$)} \\
    \midrule
    \multirow{6}{*}{Air Quality} & 1h & $0.81_{\pm 0.08}$ & $0.80_{\pm 0.22}$ & $0.87_{\pm 0.07}$ & $0.79_{\pm 0.05}$ & $\mathbf{0.97_{\pm 0.03}}$ & $\mathit{0.97_{\pm 0.03}}$ & $0.87_{\pm 0.12}$ & $0.87_{\pm 0.11}$ \\
     & 2h & $0.82_{\pm 0.08}$ & $0.87_{\pm 0.19}$ & $0.92_{\pm 0.07}$ & $0.84_{\pm 0.06}$ & $\mathbf{0.98_{\pm 0.03}}$ & $\mathit{0.98_{\pm 0.03}}$ & $0.93_{\pm 0.09}$ & $0.93_{\pm 0.09}$ \\
     & 3h & $0.83_{\pm 0.07}$ & $0.89_{\pm 0.15}$ & $0.94_{\pm 0.07}$ & $0.88_{\pm 0.06}$ & $\mathbf{0.98_{\pm 0.03}}$ & $\mathit{0.98_{\pm 0.03}}$ & $0.95_{\pm 0.07}$ & $0.95_{\pm 0.07}$ \\
     & 4h & $0.85_{\pm 0.06}$ & $0.88_{\pm 0.17}$ & $0.95_{\pm 0.07}$ & $0.90_{\pm 0.06}$ & $\mathbf{0.98_{\pm 0.02}}$ & $\mathit{0.98_{\pm 0.02}}$ & $0.95_{\pm 0.08}$ & $0.95_{\pm 0.08}$ \\
     & 5h & $0.86_{\pm 0.05}$ & $0.91_{\pm 0.11}$ & $0.95_{\pm 0.05}$ & $0.91_{\pm 0.05}$ & $\mathbf{0.98_{\pm 0.02}}$ & $\mathit{0.98_{\pm 0.02}}$ & $0.96_{\pm 0.05}$ & $0.96_{\pm 0.05}$ \\
     & 6h & $0.88_{\pm 0.05}$ & $0.91_{\pm 0.10}$ & $0.95_{\pm 0.05}$ & $0.92_{\pm 0.04}$ & $\mathbf{0.98_{\pm 0.03}}$ & $\mathit{0.98_{\pm 0.03}}$ & $0.96_{\pm 0.05}$ & $0.96_{\pm 0.05}$ \\
    \midrule
    \multirow{6}{*}{Appliances} & 1h & $0.32_{\pm 0.06}$ & $0.46_{\pm 0.30}$ & $0.69_{\pm 0.09}$ & $0.74_{\pm 0.07}$ & $\mathbf{0.89_{\pm 0.10}}$ & $\mathit{0.89_{\pm 0.10}}$ & $0.72_{\pm 0.16}$ & $0.70_{\pm 0.15}$ \\
     & 2h & $0.34_{\pm 0.07}$ & $0.45_{\pm 0.29}$ & $0.70_{\pm 0.11}$ & $0.76_{\pm 0.07}$ & $\mathbf{0.91_{\pm 0.09}}$ & $\mathit{0.90_{\pm 0.09}}$ & $0.75_{\pm 0.15}$ & $0.73_{\pm 0.15}$ \\
     & 3h & $0.37_{\pm 0.08}$ & $0.45_{\pm 0.26}$ & $0.73_{\pm 0.12}$ & $0.79_{\pm 0.06}$ & $\mathbf{0.91_{\pm 0.08}}$ & $\mathit{0.91_{\pm 0.08}}$ & $0.77_{\pm 0.14}$ & $0.75_{\pm 0.14}$ \\
     & 4h & $0.40_{\pm 0.09}$ & $0.45_{\pm 0.25}$ & $0.75_{\pm 0.11}$ & $0.81_{\pm 0.06}$ & $\mathbf{0.92_{\pm 0.07}}$ & $\mathit{0.91_{\pm 0.07}}$ & $0.79_{\pm 0.13}$ & $0.77_{\pm 0.12}$ \\
     & 5h & $0.43_{\pm 0.09}$ & $0.47_{\pm 0.23}$ & $0.77_{\pm 0.11}$ & $0.82_{\pm 0.05}$ & $\mathbf{0.92_{\pm 0.06}}$ & $\mathit{0.92_{\pm 0.06}}$ & $0.80_{\pm 0.12}$ & $0.79_{\pm 0.12}$ \\
     & 6h & $0.46_{\pm 0.09}$ & $0.48_{\pm 0.22}$ & $0.79_{\pm 0.11}$ & $0.84_{\pm 0.05}$ & $\mathbf{0.93_{\pm 0.06}}$ & $\mathit{0.92_{\pm 0.06}}$ & $0.82_{\pm 0.11}$ & $0.81_{\pm 0.11}$ \\
    \bottomrule
  \end{tabular}}
\end{table}

Finally, \cref{tab:hp_nexp} demonstrates the efficiency of \mname{} with respect to the number of provided displacement pairs in terms of the transport error.
The $\xp{N}\!=\!0$ case represents \emd{} in RBF kernel space without any displacement pairs, and we clearly see that the kernel alone is insufficient to improve its performance. The results are essentially identical to those of using \emd{} in input space as shown in \cref{tab:results_coupling}.
We present an ablation study across different kernel parameter values, $\lambda$, in \supp{app:D-classical-ot-rbf}.
Interestingly, including only one displacement pair as guidance already allows \mname{} to outperform the baselines. This highlights the critical information about the shift contained in a single correctly coupled pair, as well as \mname{}'s ability to leverage such sparse data. As expected, performance improves further as we gradually increase the number of observed displacements.

\begin{table}[htb]
  \centering
  \caption{Effect of $\xp{N}$ on transport error\,$\downarrow$ for \mname{} (RBF, $\lambda\!=\!0.0005$, $\alpha\!=\!10^2$). The values are the mean $\pm$ std.\ deviation. \textbf{Bold} = best, \textit{italic} = 2nd best, ranked within row.}
  \label{tab:hp_nexp}
    \resizebox{\textwidth}{!}{%
  \scriptsize
  \begin{tabular}{llcccccc}
    \toprule
    Dataset & Shift & $\xp{N}=0$ & $\xp{N}=1$ & $\xp{N}=3$ & $\xp{N}=5$ & $\xp{N}=10$ & $\xp{N}=20$ \\
    \midrule
    \multirow{6}{*}{Air Quality} & 1h & $0.45_{\pm 0.169}$ & $0.34_{\pm 0.179}$ & $0.25_{\pm 0.153}$ & $0.20_{\pm 0.107}$ & $0.16_{\pm 0.080}$ & $\mathbf{0.13_{\pm 0.071}}$ \\
     & 2h & $0.78_{\pm 0.265}$ & $0.71_{\pm 0.285}$ & $0.62_{\pm 0.269}$ & $0.55_{\pm 0.230}$ & $0.49_{\pm 0.216}$ & $\mathbf{0.45_{\pm 0.197}}$ \\
     & 3h & $0.99_{\pm 0.337}$ & $0.94_{\pm 0.333}$ & $0.86_{\pm 0.315}$ & $0.80_{\pm 0.294}$ & $0.74_{\pm 0.281}$ & $\mathbf{0.70_{\pm 0.269}}$ \\
     & 4h & $1.14_{\pm 0.392}$ & $1.08_{\pm 0.337}$ & $1.01_{\pm 0.323}$ & $0.95_{\pm 0.304}$ & $0.89_{\pm 0.285}$ & $\mathbf{0.85_{\pm 0.273}}$ \\
     & 5h & $1.26_{\pm 0.398}$ & $1.20_{\pm 0.337}$ & $1.12_{\pm 0.308}$ & $1.07_{\pm 0.295}$ & $1.02_{\pm 0.282}$ & $\mathbf{0.98_{\pm 0.278}}$ \\
     & 6h & $1.36_{\pm 0.373}$ & $1.31_{\pm 0.323}$ & $1.22_{\pm 0.273}$ & $1.17_{\pm 0.264}$ & $1.12_{\pm 0.260}$ & $\mathbf{1.09_{\pm 0.262}}$ \\
    \midrule
    \multirow{6}{*}{Appliances} & 1h & $0.26_{\pm 0.172}$ & $0.20_{\pm 0.178}$ & $0.10_{\pm 0.128}$ & $0.06_{\pm 0.108}$ & $0.03_{\pm 0.063}$ & $\mathbf{0.01_{\pm 0.041}}$ \\
     & 2h & $0.77_{\pm 0.314}$ & $0.62_{\pm 0.353}$ & $0.37_{\pm 0.302}$ & $0.25_{\pm 0.246}$ & $0.11_{\pm 0.147}$ & $\mathbf{0.05_{\pm 0.083}}$ \\
     & 3h & $1.12_{\pm 0.400}$ & $0.96_{\pm 0.443}$ & $0.63_{\pm 0.426}$ & $0.46_{\pm 0.371}$ & $0.25_{\pm 0.235}$ & $\mathbf{0.12_{\pm 0.137}}$ \\
     & 4h & $1.41_{\pm 0.406}$ & $1.27_{\pm 0.471}$ & $0.93_{\pm 0.485}$ & $0.72_{\pm 0.439}$ & $0.43_{\pm 0.317}$ & $\mathbf{0.23_{\pm 0.187}}$ \\
     & 5h & $1.64_{\pm 0.341}$ & $1.51_{\pm 0.448}$ & $1.20_{\pm 0.490}$ & $0.98_{\pm 0.480}$ & $0.65_{\pm 0.398}$ & $\mathbf{0.38_{\pm 0.283}}$ \\
     & 6h & $1.84_{\pm 0.291}$ & $1.72_{\pm 0.401}$ & $1.47_{\pm 0.459}$ & $1.27_{\pm 0.467}$ & $0.94_{\pm 0.437}$ & $\mathbf{0.63_{\pm 0.374}}$ \\
    \bottomrule
  \end{tabular}}
\end{table}

\medskip

In summary, our results show that displacement guidance significantly improves OT's ability to capture the true transport phenomena, even with only limited access.
Furthermore, we showed that the choice of kernel plays an important role in the modeling of the feature space and helps account for nonlinear shifts. Specifically, the kernel does not improve the transport solution in isolation; rather, it serves as a mechanism for integrating knowledge from sample displacements more effectively.

\section{Use Case 1: Refining Transport Models with Longitudinal Tracking}
\label{sec:use_case_migration}

\noindent Predicting and understanding the motion of populations is a central research question with relevance in a variety of scientific domains, such as epidemiology, wildlife monitoring, urban mobility, and more. This task becomes particularly challenging when population motion is only partially observable, which may arise from the technical infeasibility of large-scale longitudinal tracking or from its explicit avoidance to address privacy requirements. In practice, available data may consist of a bulk of unlinked data augmented by a few linked instances, e.g., collected through a specific study.

\begin{figure}[b!]
    \centering
    \makebox[\textwidth][c]{
  \includegraphics[width=1.2\linewidth,trim={0.2cm 0.2cm 0.2cm 0.2cm},clip]{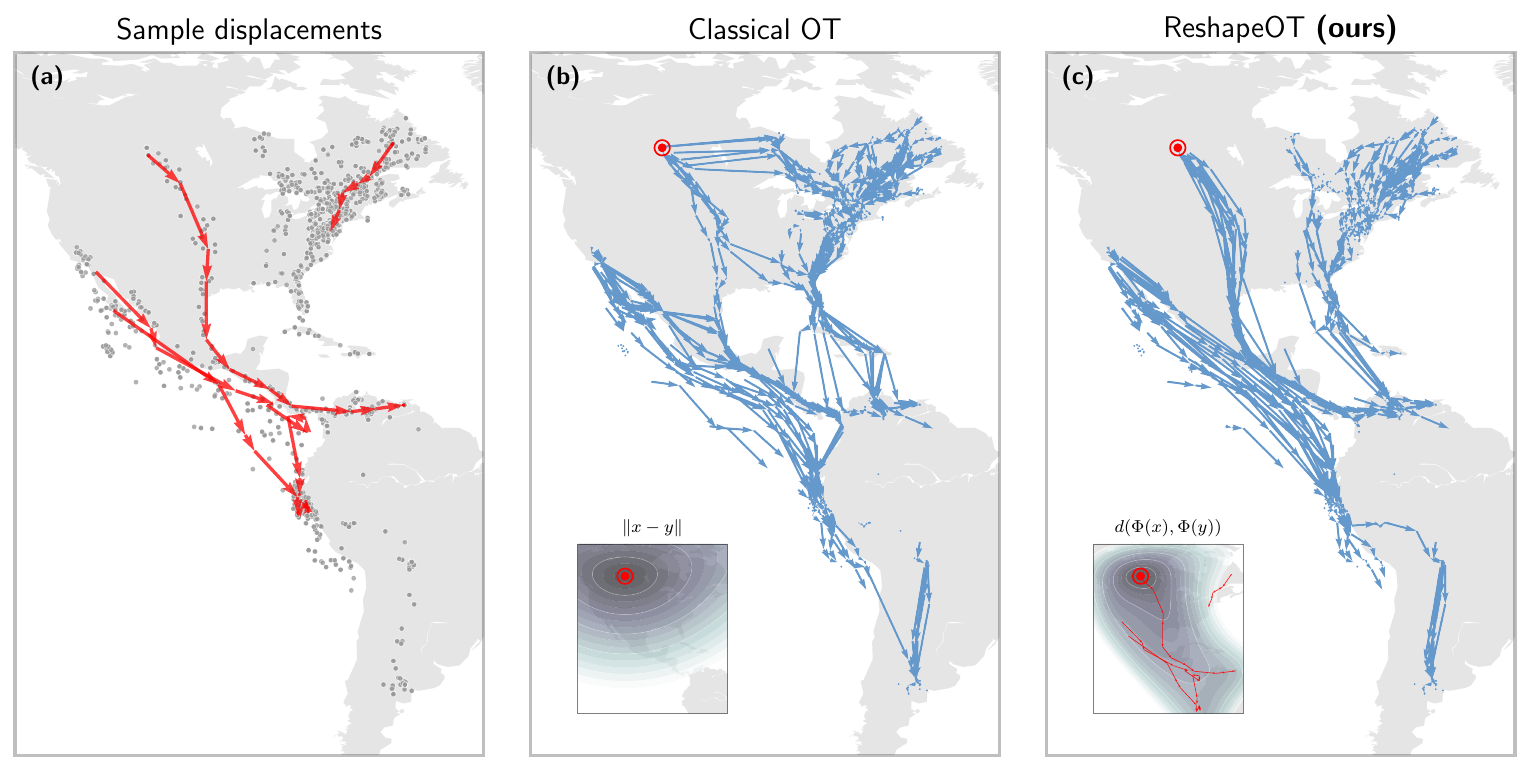}
  }
    \caption{%
    (a)~Training observations (gray) from multiple bird tracking studies and manually selected displacements (red).
    (b)~Coupling of \emd{} from a squared Euclidean cost matrix built on Cartesian coordinates.
    (c)~Coupling of \mname{} with RBF kernel ($\lambda\!=\!1$, $\alpha\!=\!10^3$) on Cartesian coordinates.
    The insets show the induced square-root (for better visibility) cost fields of \emd{} and \mname{} for a given reference point relative to a grid of targets.
    }%
    \label{fig:usecase_one}
\end{figure}

To demonstrate \mname{} on a real-world geospatial transport problem, we consider bird migration and the retrieval of migratory routes.
In this scenario, we have extensive population-level information, but we do not know individuals' specific travel routes.
A small number of birds, however, have GPS trackers that provide ground-truth displacement data along their flight paths. We use this setting to demonstrate \mname{}'s utility in leveraging such sparse individual-tracking data to recover population transport dynamics more accurately than using marginal data alone.
While a similar experimental setup was adopted by Scarvelis~\&~Solomon~\cite{DBLP:conf/iclr/Scarvelis023}, we focus specifically on integrating individual bird-tracking data rather than population-level observations.

We aggregated bird-tracking data from multiple studies from the MoveBank repository~\cite{movebank}, covering various migratory species observed across North America (\supp{app:E-bird-data} provides a list of the individual studies). For each bird and year, we compute median geographic positions (latitude and longitude) per individual and calendar week over a period of~13 weeks (mid-September to mid-December), representing autumn migration. Consecutive weeks serve as source and target distributions, representing population snapshots before and after a migration phase.
Since this splitting creates overlapping domains, we added small Gaussian noise ($\sigma\!=\!10^{-4}$) to avoid identical points and to break potential ties.
We remove outlier displacements beyond the 99th percentile to exclude, e.g., GPS artifacts.
The dataset features are latitude and longitude information, which we transformed into 3D Cartesian coordinates so that the Euclidean distance corresponds to the chordal distance between points on the sphere, a close approximation of the great-circle distance at the spatial scales of weekly displacements.

To construct the ground-truth displacement instances used as guidance, we manually selected~4 representative, complete bird trajectories (cf.\ \supp{app:E-bird-data}) spanning the entire assessed time frame from the available studies and removed those points from the training data.
This preprocessing created \num{2266} untracked source and target samples, and $\xp{N}\!=\!47$ ground-truth displacements, as shown in \cref{fig:usecase_one}a. To integrate the sample displacements well, in particular to account for local variations in the direction of sample trajectories, we opt for the RBF kernel variant of \mname{}.
As a baseline, we use \emd{} with a squared Euclidean cost matrix built on the Cartesian coordinates.

\Cref{fig:usecase_one} shows the coupling solutions for \emd{} in panel~(b), and \mname{} in~(c). 
Classical OT scatters mass broadly, including substantial east--west transport that is physically implausible for autumn migration, which proceeds primarily north-to-south. The inset in~(b) reveals the cause: a Euclidean ground metric induces a rotationally-symmetric field around any reference point (distorted only visually by the 2D map projection), so every geographic direction is equally cheap, and the optimal coupling has no preference for the dominant migration corridors.
By incorporating ground-truth migration routes from only four birds (shown as red arrows in~(a)), \mname{} reconfigures the cost function to favor movement along the observed migration patterns, concretely, prioritizing the north--south corridor over spurious east--west travel. The inset in~(c) shows the effect on the ground metric: the field is now stretched along the observed displacements, making north--south movement cheap and east--west travel comparatively expensive.
This reshaping leads to more consistent and plausible trajectories that remain consistent with their local trends (i.e.~species or swarms travel along similar routes).

\medskip

In summary, this use case shows how \mname{}, specifically its kernel variant, efficiently integrates real-world displacements to yield more plausible couplings. By using data from only four tracked trajectories, \mname{} recovers the dominant north--south migration corridor while avoiding spurious east--west shortcuts.

\section{Use Case 2: Domain Adaptation Under Manifold Rotation}
\label{sec:use_case_da}
\noindent In our second use case, we apply \mname{} to a task, in which a classifier is trained on a labeled source domain and must adapt to an unlabeled but related target domain~\cite{DBLP:conf/icml/GaninL15}.
OT-based domain adaptation addresses this by, e.g., transporting source points (together with their labels) to the target distribution via the barycentric mapping induced by the coupling~\cite{DBLP:journals/siamis/FerradansPPA14,courtyOptimalTransportDomain2017}, and training the classifier on the transported data.

We revisit the \textit{Rotating Moons} scenario from \cref{sec:datasets}, this time as a synthetic domain-adaptation benchmark: a binary classification task with two interleaved moon-shaped classes, where the target distribution is obtained by rotating the source~\cite{courtyOptimalTransportDomain2017}.
We additionally sample a held-out test set of \num{1000} points from the rotated target distribution.
The \textit{Setup} column in \cref{fig:usecase_two} shows a concrete example at 40$^\circ$.
This rotating two-moons setup mirrors the synthetic benchmark of Liu \etal{}~\cite{DBLP:conf/iclr/LiuBZ20}, where pair matchings likewise serve as side information to probe whether OT recovers the rotation rather than collapsing onto a cross-class shortcut between adjacent moon edges.
We compare \mname{} to \emd{}, Sinkhorn, and Laplace-OT~\cite{courtyOptimalTransportDomain2017}.
As in~\cref{sec:experiments}, we use an RBF kernel ($\lambda\!=\!2$) with $\alpha\!=\!10^4$ for regularization of $\Sigma$.
For each method and trial, we transport the labeled source to the target via barycentric mapping, train an SVM on the transported data with hyperparameters selected via grid search, and report classification error on the held-out test set.
We repeat this protocol over 20 random seeds.

\begin{figure}[t]
    \centering
    \makebox[\textwidth][c]{
    \includegraphics[width=1.2\linewidth,trim={0.2cm 0.2cm 0.2cm 0.2cm},clip]{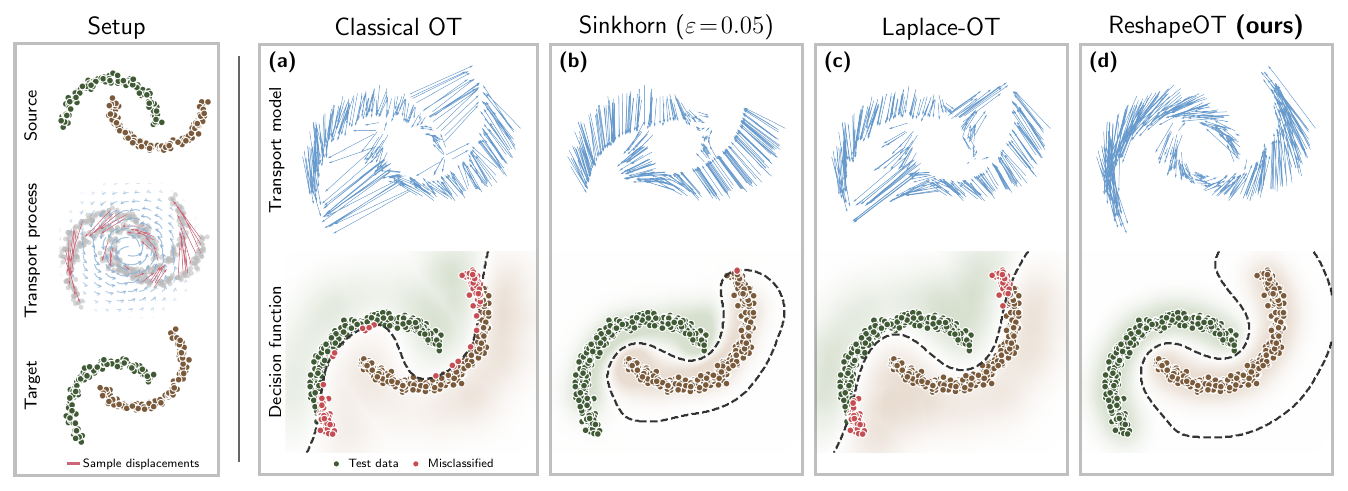}}
    \caption{Comparison of different OT-based domain adaptation methods on the \textit{Rotating Moons} task at $40^\circ$ rotation. Arrows denote the transport to the barycentric mapping, as determined by the coupling for each method and setting. The experiment uses $150$ samples per class and domain, and $\xp{N}\!=\!40$ randomly selected ground-truth displacements.}
    \label{fig:usecase_two}
\end{figure}

\medskip

In \cref{fig:usecase_two} we exemplify the transport models of the baselines and \mname{} for one of the trials at 40$^\circ$ rotation, together with the learned SVM decision boundaries and predictions.
It is apparent that standard OT methods resort to spurious shortcuts in pursuit of the least-action principle.
The entropic regularization helps improve this issue, but it also alters the shape of the transported distribution.

Classification errors at various degrees of target rotation are shown in \cref{fig:moons_da_results}, with increasing rotation angles corresponding to increasingly difficult domain-adaptation problems. A more exhaustive table of results is provided in \supp{app:F-further-da-results}, including additional baselines and ablations.
The results clearly show that \mname{} outperforms all other methods in the most challenging settings.
The guidance provided by the ground-truth displacements is a good model of the manifold transformation, yielding robust couplings across all levels of rotation.
Notably, Sinkhorn with $\varepsilon\!=\!0.05$ is a strong competitor in the domain adaptation setting, although it performed poorly in our quantitative experiments (see \cref{tab:results_coupling}).
Entropic smoothing prevents Sinkhorn from recovering exact pointwise correspondences. But that same smoothing also averages out cross-class shortcuts in the barycentric mapping, preserving the dominant rotation direction (cf.\ \cref{fig:usecase_two}) sufficiently for the SVM to fit a correct decision boundary.
Note that a linear kernel fails in this setting as it cannot account for the rotation geometry (cf.\ the results in \supp{app:F-further-da-results}).

\begin{figure}[H]
    \centering
    \includegraphics[width=\linewidth]{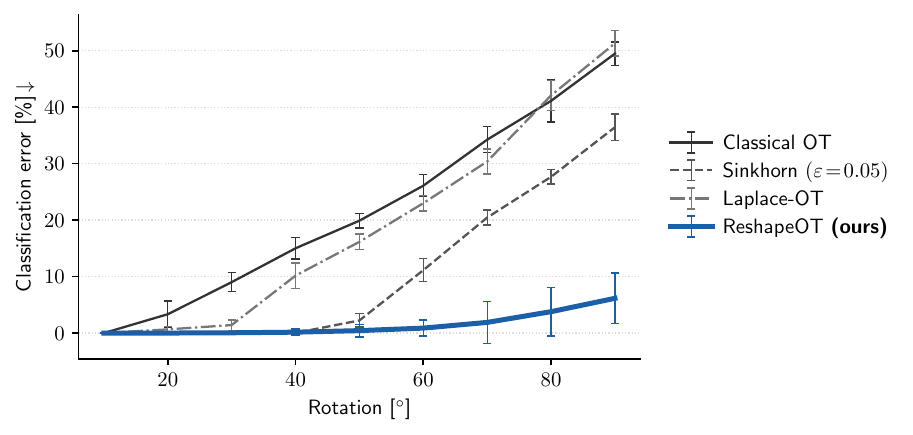}
    \caption{Domain adaptation results in terms of the SVM classification error [\%]\,$\downarrow$ for the \textit{Rotating Moons} dataset. The ablations of \mname{} use the same parametrization as their main counterparts.}
    \label{fig:moons_da_results}
\end{figure}

\medskip
In summary, \mname{} substantially reduces the domain-adaptation classification error on this benchmark by translating a small number of ground-truth displacements into a transport solution that aligns with the manifold's rotation, whereas unguided baselines are hampered by cross-class shortcuts.
This result has direct practical consequences in scientific applications: when technical batch effects place samples closer in Euclidean space than biologically meaningful variation (cf.~\cite{komenRobustFoundationModels2025}), unguided OT can match on the wrong signal. \mname{} allows a practitioner to encode prior knowledge about which directions of variation are biologically meaningful, steering the coupling away from spurious technical correlations and toward genuine biological correspondence.

\section{Conclusion}
\label{sec:conclusion}

\noindent Optimal transport is a workhorse in modeling distribution shifts, but the reliability of any OT solution is bounded by the fidelity of its ground metric. With a Euclidean metric, OT solvers exploit shortcuts that violate manifold structure both in the data and in how it is transported. \mname{} addresses this by reshaping an initial Euclidean ground metric into a Mahalanobis distance derived from second-order statistics of observed displacements, carving expressways through the cost landscape along empirically supported directions.

\smallskip

Our approach is easy to implement, minimally affects OT's compute costs, and can be extended to kernel feature spaces. We note that \mname{} is not limited to the classical OT problem formulation. Any OT algorithm that operates on a cost matrix (e.g., Sinkhorn~\cite{cuturiSinkhornDistancesLightspeed2013}, Partial-OT~\cite{chapelPartialOptimalTranport2020}, etc.) can in principle be used on top of the new ground metric.
Empirically, our approach demonstrates consistently high accuracy, outperforming a representative set of baselines across several benchmark evaluations. We further demonstrate its usefulness in two practical use cases.

\smallskip

Beyond \mname{}'s capabilities, our paper demonstrates the practical feasibility of integrating past displacements into the transport modeling task and the associated benefits. We see a particular potential for our approach in single-cell tracking (cf.~\cite{schiebinger2019optimal,kleinMappingCellsTime2025}). Building on the work of~\cite{DBLP:conf/iclr/Scarvelis023} that demonstrated the benefit of combining OT and existing data, our method, which explicitly aligns transport to past displacements, would allow specifically for the integration of longitudinal cell trajectories. 

\smallskip
Despite its strengths, \mname{} has limitations. Most importantly, it requires a representative and robust set of displacements; without them, the intrinsic transport cannot be accurately extracted, and the optimizer may be misled. Furthermore, the kernelized variant requires precise hyperparameter tuning. These parameters must be flexible enough to align with sample displacements, yet sufficiently rigid to avoid introducing spurious structural barriers or shortcuts.

\smallskip

Finally, a natural future work would be to embed \mname{} within the information-geometric framework, exploiting known links between Wasserstein geometry and statistical manifolds~\cite{Amari2018,Otto2001,Ay2024}. Concretely, sample displacements could be represented as tangent vectors on the manifold of probability distributions, enabling a principled interpolation between source and target distributions guided by prior trajectories on that manifold, thereby unifying the metric-learning and geometric perspectives pursued here.

\vspace{6pt} 

\funding{%
K.R.M.\ was supported in part by the German Federal Ministry for Research, Technology and Space (BMFTR) under grants 01IS18025A, 031L0207D, 01IS18037A, 16IS24087C.  
K.R.M.\ was also supported by the Institute of Information \& communications Technology Planning \& Evaluation (IITP) grants funded by the Korea government (MSIT) (No.\ RS-2019-II190079, Artificial Intelligence Graduate School Program of Korea University and No.\ RS-2024-00457882, AI Research Hub Project).}

\acknowledgments{
During the preparation of this manuscript, the authors used Claude Sonnet~4.6, Opus~4.6, and Gemini~3.1 for rewriting parts of the text. The authors have reviewed and edited the output and take full responsibility for the content of this publication.
}

\appendixtitles{no}
\appendixstart
\appendix

\section{Kernelization of the \mname{} Distance}
\label{app:proof-kernelization}
\noindent This appendix proves the kernel representation of the \mname{} distance presented in \cref{eq:kernel-distance}. The derivation shows that $d$ depends on the data only through inner products, and therefore admits a realization in any RKHS induced by a positive semi-definite kernel.
\begin{Proposition}[Kernelization of $d$]
Let $\lbrace \xp{x_1},\ldots,\xp{x_M}\rbrace, \lbrace \xp{y_1},\ldots,\xp{y_N}\rbrace\subset\R^d$ be source and target samples with transport coupling $\xp{\gamma_{ij}}\geq 0$, and let $\kernel\colon \R^d\times\R^d\to\R$ be a positive semi-definite kernel. Define $\Sigma\in\R^{d\times d}$ as in \cref{eq:Sigma} and $d\colon\R^d\times\R^d\to\R$ as in \cref{eq:mahalanobis}.
Write $\boldsymbol{z} = (\xp{x_1},\ldots,\xp{x_M},\xp{y_1},\ldots,\xp{y_{N}})$ for the concatenation of all samples, and let $K\in\R^{{(M+N)}\times{(M+N)}}$ be the kernel Gram matrix with $K_{kl} = \kernel(z_k, z_l)$ and $A\in\R^{(M+N)\times(M+N)}$ the block matrix
\begin{equation*}
    A = \begin{pmatrix} \mathrm{diag}(a) & -\xp{\gamma} \\ -\xp{\gamma}^\top & \mathrm{diag}(b) \end{pmatrix},
\end{equation*}
where $a_i = \sum_j\xp{\gamma_{ij}}$ and $b_j = \sum_i\xp{\gamma_{ij}}$ are the marginals.
Then $d$ admits the kernel representation
\begin{equation*}
    d(\Phi(x),\Phi(y)) = \sqrt{\kernel(x,x)-2\kernel(x,y)+\kernel(y,y)-g^\top Mg},
\end{equation*}
where $g_k = \kernel(z_k,x) - \kernel(z_k,y)$ and $M = 
\eta A^{\frac12}(I + \eta A^{\frac12}KA^{\frac12})^{-1}A^{\frac12}$.
\end{Proposition}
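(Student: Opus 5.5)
The plan is to work in feature space throughout and write everything with a feature matrix. Let $Z=[\Phi(z_1),\ldots,\Phi(z_{M+N})]$ be the (possibly infinite-dimensional) operator whose columns are the mapped samples, so that $Z^\top Z=K$, $Z^\top\Phi(x)=\kernel(\boldsymbol{z},x)$, and $g=Z^\top(\Phi(x)-\Phi(y))$. The first step is to express the feature-space second-moment operator in this notation. Expanding $(\Phi(\xp{x_k})-\Phi(\xp{y_l}))(\Phi(\xp{x_k})-\Phi(\xp{y_l}))^\top$ and summing with weights $\xp{\gamma_{kl}}$ gives four groups of terms. The squared source terms collect into $\diag(a)$, the squared target terms into $\diag(b)$, and the two cross terms into $-\xp{\gamma}$ and $-\xp{\gamma}^\top$. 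This yields the identity
\begin{equation*}
\Sigma_\Phi = Z A Z^\top .
\end{equation*}
Since $A$ is the weighted Laplacian of the nonnegative matrix $G$, it is symmetric positive semi-definite (its quadratic form is $\sum_{kl}\xp{\gamma_{kl}}(u_k-v_l)^2$). Hence $A^{1/2}$ exists, and we may write $\Sigma_\Phi=BB^\top$ with $B=ZA^{1/2}$.

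The second step applies the Woodbury (push-through) identity to the inverse in \cref{eq:mahalanobis}:
\begin{equation*}
(I+\eta BB^\top)^{-1}=I-\eta B(I+\eta B^\top B)^{-1}B^\top .
\end{equation*}
Here $B^\top B=A^{1/2}KA^{1/2}$ is a finite $(M+N)\times(M+N)$ PSD matrix, so $I+\eta B^\top B$ is invertible for $\eta\ge0$. The identity itself should be checked directly by multiplying out, using $B^\top(I+\eta BB^\top)=(I+\eta B^\top B)B^\top$. I would do this rather than cite the finite-dimensional Woodbury formula, since $I+\eta BB^\top$ is an operator on the RKHS. It is bounded and invertible there because $BB^\top$ is finite-rank and PSD.

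The third step substitutes $v=\Phi(x)-\Phi(y)$. The squared distance is $v^\top v-\eta\, v^\top B(I+\eta B^\top B)^{-1}B^\top v$. The first term is the kernel distance $\kernel(x,x)-2\kernel(x,y)+\kernel(y,y)$. For the second term, $B^\top v=A^{1/2}Z^\top v=A^{1/2}g$, so it equals $g^\top M g$ with $M$ as stated. Taking the square root finishes the argument, and nonnegativity of the radicand is automatic from positive definiteness of $(I+\eta\Sigma_\Phi)^{-1}$.

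The main obstacle is bookkeeping rather than depth. The delicate points are the expansion giving exactly the Laplacian $A$ (signs and the marginal diagonals) and justifying the Woodbury step in a possibly infinite-dimensional feature space. For the linear kernel $\Phi=\mathrm{id}$, the argument reduces to the finite-dimensional statement for the original $\Sigma$, which serves as a sanity check.
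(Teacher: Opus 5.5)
Your proof is correct and follows essentially the same route as the paper: you factor the second-moment matrix as $\boldsymbol{z}^\top A\boldsymbol{z}$ through the coupling Laplacian $A$, apply the Woodbury/push-through identity with $B=\boldsymbol{z}^\top A^{1/2}$ so that only $I+\eta A^{1/2}KA^{1/2}$ has to be inverted, and then substitute $g$. The one difference is that you work directly in feature space, checking $A\succeq 0$ and verifying the push-through identity as an operator identity, whereas the paper derives the formula in input space and then invokes the kernel trick; your version makes that final substitution step rigorous.
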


\begin{proof}
    Stack all training points as rows of $\boldsymbol{z} = (\xp{x_1},\ldots,\xp{x_M},\xp{y_1},\ldots,\xp{y_N})\in\R^{(M+N)\times d}$, and for each pair $(i,j)$ define the indicator vector $\alpha_{ij}\in\lbrace-1,0,+1\rbrace^{M+N}$ with $+1$ at position $i$ and $-1$ at position $M+j$. Then $\xp{x_i}-\xp{y_j} = \boldsymbol{z}^\top\alpha_{ij}$, and substituting into the definition of $\Sigma$ gives
    \begin{equation*}
        \Sigma = \sum_{ij}\xp{\gamma_{ij}}(\boldsymbol{z}^\top\alpha_{ij})(\boldsymbol{z}^\top\alpha_{ij})^\top = \boldsymbol{z}^\top\underbrace{\left( \sum\nolimits_{ij}\xp{\gamma_{ij}}\alpha_{ij}\alpha_{ij}^\top \right)}_{=A}\boldsymbol{z} = \boldsymbol{z}^\top A\boldsymbol{z}.
    \end{equation*}
    One verifies that $A$ has the stated block structure. 

    Applying the Woodbury identity to $(I + \eta \boldsymbol{z}^\top A\boldsymbol{z})^{-1}$ yields
    \begin{equation*}
        \left(I + \eta \boldsymbol{z}^\top A\boldsymbol{z}\right)^{-1} = \left(I - \boldsymbol{z}^\top \Lambda\boldsymbol{z}\right),
    \end{equation*}
    where $\Lambda = 
    \eta A^{\frac12}(I + \eta A^{\frac12}KA^{\frac12})^{-1}A^{\frac12}$
    and $K=\boldsymbol{z}\boldsymbol{z}^\top$ is the Gram matrix with $K_{kl} = z_k^\top z_l$. This converts the $d\times d$ inversion into an $(M+N)\times(M+N)$ inversion.

    Substituting into $d$ and writing $g = \boldsymbol{z}(x-y)\in\R^{M+N}$:
    \begin{align*}
        d(x,y)^2 &= (x-y)^\top\left(I-\boldsymbol{z}^\top M\boldsymbol{z}\right)(x-y) \\
        &= \|x-y\|^2-g^\top Mg.
    \end{align*}
    Every quantity depends on the data only through inner products: $K_{kl} = z_k^\top z_l$, $g_k = z_k^\top x - z_k^\top y$, and $\|x-y\|^2 = x^\top x-2x^\top y+y^\top y$. The matrix $A$ depends only on $\xp{\gamma}$. Replacing each inner product $u^\top v$ by the kernel evaluation $\kernel(u, v)$ gives the stated formula, which is valid in any RKHS induced by $\kernel$.
\end{proof}

If displacement data is provided as a set of paired observations $(\xp{x_l}, \xp{y_l})_{l=1}^N$, and marginals are modeled as uniform, then $A$ has the simple structure
\begin{equation*}
    A = \frac1N\begin{pmatrix} I & -I \\ -I & I\end{pmatrix}
\end{equation*}
where $A^{\frac12}$ can be computed in closed form as
\begin{equation*}
    A^{\frac12} = \frac{1}{\sqrt{2N}}\begin{pmatrix} I & -I \\ -I & I\end{pmatrix}
\end{equation*}

\reftitle{References}

\clearpage
\begin{center}
    \vspace*{5mm}
    {\LARGE \textsc{Supplementary Notes}}
    \vskip 5mm
\end{center}
\bigskip

\renewcommand{\appendixname}{Supplementary Note}
\renewcommand{\theproposition}{S\arabic{proposition}}
\renewcommand{\thefigure}{S\arabic{figure}}
\renewcommand{\thetable}{S\arabic{table}}
\renewcommand{\theHproposition}{Supp.\arabic{proposition}}
\renewcommand{\theHfigure}{Supp.\arabic{figure}}
\renewcommand{\theHtable}{Supp.\arabic{table}}
\setcounter{proposition}{0}
\setcounter{figure}{0}
\setcounter{table}{0}
\setcounter{section}{0}
\renewcommand{\theHsection}{Supp.\arabic{section}}
\section{Time-series Dataset Details}
\label{app:A-time-series-datasets}
\noindent The \textit{Air Quality}~\cite{air_quality_360} and \textit{Appliances}~\cite{appliances_energy_prediction_374} time-series datasets used in \mcref{sec:experiments} were preprocessed in line with~\cite{Naumann2026}, with the following main changes:
\begin{itemize}
    \item We use a \texttt{RobustScaler} (based on the median and IQR) instead of a \texttt{StandardScaler} (based on the mean and standard deviation) to improve robustness to outliers.
    \item For this reason, we also do not remove outlier instances. Outlier features, however, are removed as in~\cite{Naumann2026}.
    \item In case of \textit{Appliances}, we use the \textit{median} instead of the mean for resampling the time-series to 1h-intervals. \textit{Air Quality} already comes sampled in 1h-intervals.
\end{itemize}
The splitting into the unpaired subsets is described in~\cite{Naumann2026}.
Furthermore, we consider the same time intervals, i.e., all 123 time pairs spanning shifts of 1~to~6 hours.
Ground-truth pairing for a sample at time $t_d$ is naturally provided by the time-series record at time $t_d+h$ (with $0 \leq t_d < t_d + h \leq 23$ and $h \in \{1,2,3,4,5,6\}$) on the same day $d$ (cf.~\cite{Naumann2026}).

\noindent For computing the `transport error' metric, all methods use a barycentric mapping~\cite{DBLP:journals/siamis/FerradansPPA14,courtyOptimalTransportDomain2017} to compute the transport induced by their coupling.

\section{Synthetic Rotating Moons Generation}
\label{app:B-moons-generation}
\noindent Below, we show the Python script used to generate the synthetic \textit{Rotating Moons} data for a given rotation, as used in \mcref{sec:experiments,sec:use_case_da}.
It largely follows the experimental description from~\cite{courtyOptimalTransportDomain2017}.
\begin{lstlisting}[
language=Python,
caption=Python code to generate the synthetic \textit{Rotating Moons} dataset.,
label={lst:moons},
]
import numpy as np
from sklearn.datasets import make_moons
from sklearn.utils import check_random_state

def make_moons_da(
    rotation: float,
    n_per_moon: int = 150,
    noise: float = 0.05,
    n_test: int = 1000,
    random_state=None,
):
    rng = check_random_state(random_state)

    # --- Source: standard two moons ---------------------------------
    Xs, ys = make_moons(n_samples=2 * n_per_moon, noise=noise, random_state=rng)
    Xs[:, 0] -= 0.5

    # --- Target: rotate source points -------------------------------
    theta = np.radians(-rotation)
    R = np.array([[np.cos(theta), -np.sin(theta)], [np.sin(theta), np.cos(theta)]])
    Xt = Xs @ R
    yt = ys.copy()

    # --- Independent test set from the target distribution ----------
    Xtest, ytest = make_moons(n_samples=n_test, noise=noise, random_state=rng)
    Xtest[:, 0] -= 0.5
    Xtest = Xtest @ R

    return Xs, ys, Xt, yt, Xtest, ytest
\end{lstlisting}

\section{Hyperparameter Sensitivity of \mname{}}
\label{app:C-hyperparameter-selection}
\noindent This supplementary note provides ablations of \mname{}'s hyperparameters.
In particular, \scref{fig:rotating_time_series_hp_grids,fig:rotating_moons_hp_grids} illustrate the effects of the precisions $\lambda$ of the RBF kernel $\kernel(x,y)=\exp(-\lambda \|x-y\|^2)$ in combination with the $\Sigma$ regularization scaling parameter $\alpha$ with respect to the mean transport error on the evaluated datasets (cf.\ \mcref{sec:evaluation_metrics,sec:datasets}).

\begin{figure}[H]
    \centering
    \includegraphics[width=1.0\linewidth]{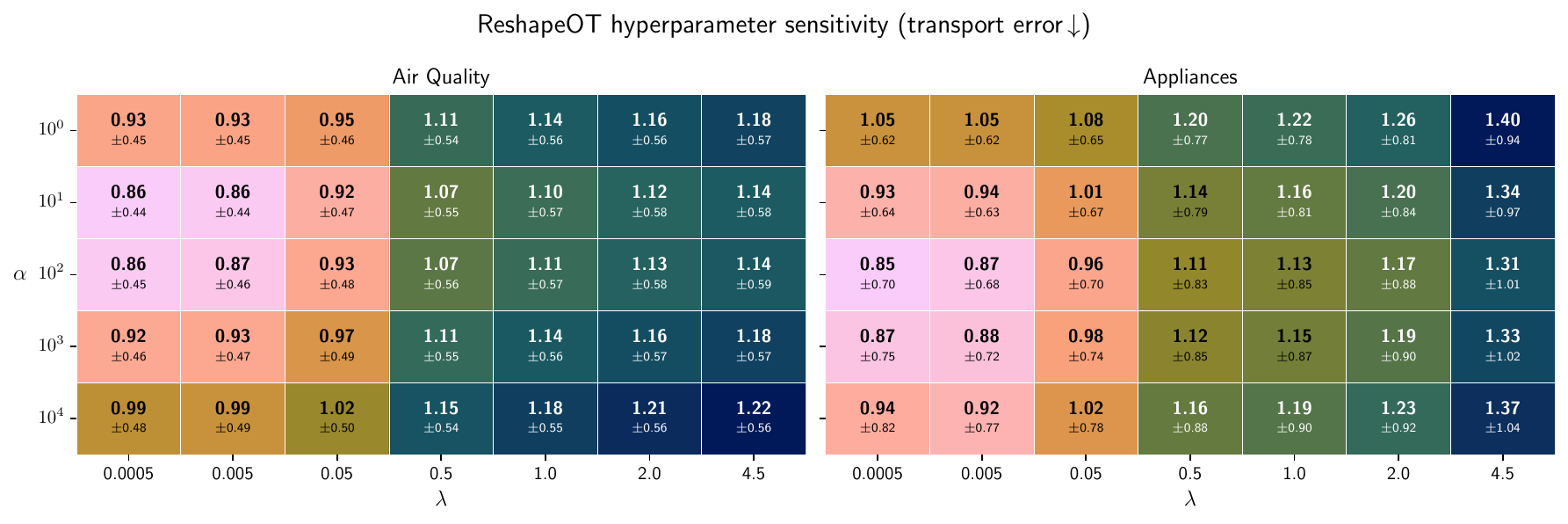}
    \caption{Averaged transport errors (cf.\ \mcref{sec:evaluation_metrics}) over all \textit{Air Quality} and \textit{Appliances} shift delays (1h--6h) for \mname{}'s $(\lambda,\alpha)$ hyperparameter combinations.}
    \label{fig:rotating_time_series_hp_grids}
\end{figure}
\begin{figure}[H]
    \centering
    \includegraphics[width=1.0\linewidth]{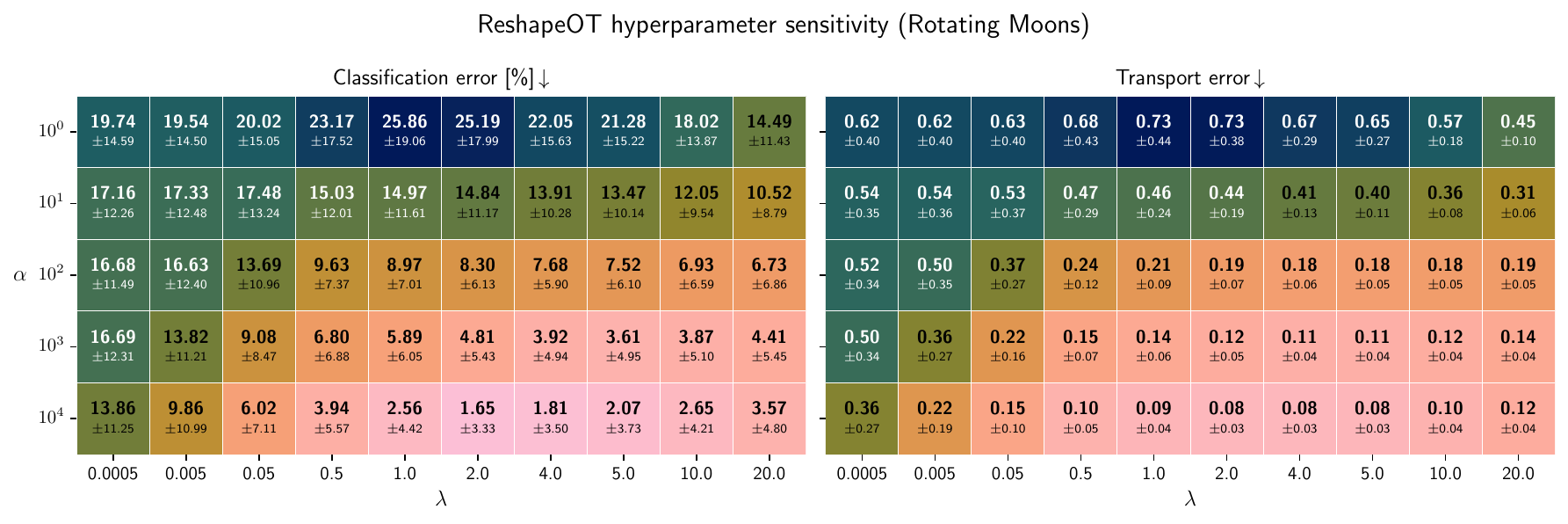}
    \caption{Averaged evaluation metrics over all \textit{Rotating Moons} rotations (10$^\circ$\,--\,90$^\circ$) for \mname{}'s $(\lambda,\alpha)$ hyperparameter combinations. `Classification error' is the domain adaptation metric of \mcref{sec:use_case_da}, and `transport error' corresponds to the metric introduced in \mcref{sec:evaluation_metrics}.}
    \label{fig:rotating_moons_hp_grids}
\end{figure}

\section{Classical OT in RBF Kernel Space}
\label{app:D-classical-ot-rbf}
\noindent In \scref{fig:rotating_moons_rbf_no_experience}, we show an ablation of \emd{} in RBF kernel space for different precisions $\lambda$ in terms of the domain adaptation classification error on the \textit{Rotating Moons} task. As we see, none of the evaluated kernel precisions improves over the baseline performance of \emd{} in the input space.

\begin{figure}[H]
    \centering
    \includegraphics[width=0.8\linewidth]{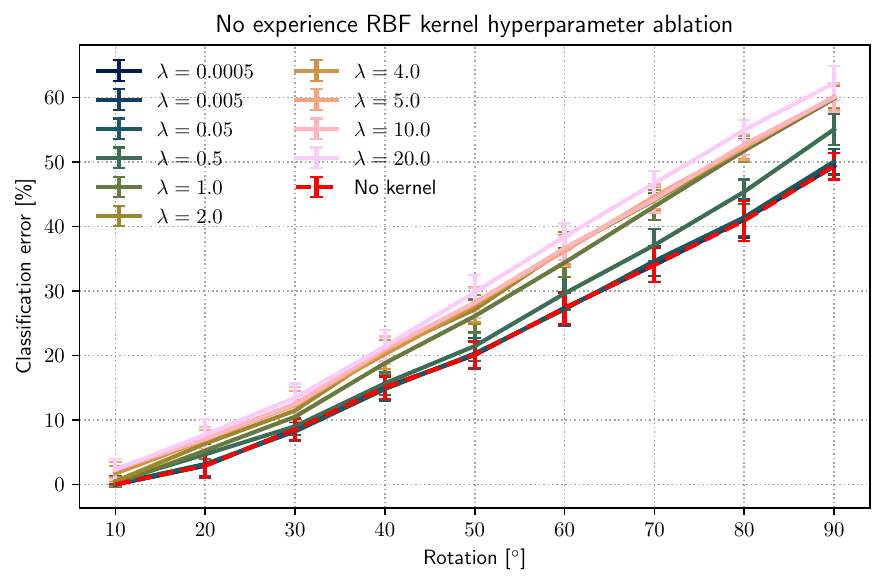}
    \caption{Ablation on applying \emd{} in the RBF kernel space (i.e.~without displacement guidance) at different $\lambda$ precisions for the \textit{Rotating Moons} domain adaptation task of \mcref{sec:use_case_da} measured by the classification error [\%]\,$\downarrow$.}
    \label{fig:rotating_moons_rbf_no_experience}
\end{figure}

\section{Bird Migration Data}
\label{app:E-bird-data}
\noindent The datasets used in \mcref{sec:use_case_migration} are publicly available from the MoveBank repository~\cite{movebank}, and we list the specific studies in \scref{tab:movebank_studies}, and selected trajectories for \mname{} guidance in \scref{tab:movebank_experience_points}.
The subsets we used in the main text span mid-September to mid-December, covering autumn migration.
Some birds have been tracked over multiple years, so they can recur when a new migration period starts.
After preprocessing, 2266 source--target pairs remain in the training data, and 47 pairs remain for the guidance of \mname{}.
Note that the \textit{Galapagos Albatrosses} study for the considered time frame contains barely any movement, as these birds do not migrate in that period. Rather, it acts as natural noise as found in real-world recorded data.
In general, some birds may also settle early after finishing their migration within the time frame, thereby becoming mostly stationary data points.
Finally, \textit{NYSDEC Raptor Tracking} includes four bird species, whereas all other datasets include only a single species.
\begin{table}[H]
    \footnotesize
    \centering
    \caption{Data studies from MoveBank~\cite{movebank} used in \mcref{sec:use_case_migration}.}
    \begin{tabular}{l r r}
        \toprule
        \textbf{Study name} & \textbf{\# Birds} & \textbf{\# Samples} \\
        \midrule
         Galapagos Albatrosses & \multirow{2}{*}{4} & \multirow{2}{*}{14} \\
         \multicolumn{3}{l}{{\scriptsize \url{https://dx.doi.org/10.5441/001/1.3hp3s250}}}\\
         \midrule
         Migration of Sabine's gulls from the Canadian High Arctic & \multirow{2}{*}{21} & \multirow{2}{*}{243} \\
         \multicolumn{3}{l}{{\scriptsize \url{https://dx.doi.org/10.5441/001/1.c745vb70}}}\\
         \midrule
         NYSDEC Raptor Tracking & \multirow{2}{*}{61} & \multirow{2}{*}{1300} \\
         \multicolumn{3}{l}{{\scriptsize \url{https://dx.doi.org/10.5441/001/1.s65q50j0}}}\\
         \midrule
         Turkey vultures in North and South America & \multirow{2}{*}{19} & \multirow{2}{*}{603} \\
         \multicolumn{3}{l}{{\scriptsize \url{https://dx.doi.org/10.5441/001/1.46ft1k05}}}\\
         \midrule
         Pandion haliaetus Osprey - SouthEast Michigan & 17 & 153 \\
         \multicolumn{3}{l}{{\scriptsize \url{https://www.movebank.org/cms/webapp?gwt_fragment=page=studies,path=study10204361}}}\\
        \bottomrule
    \end{tabular}
    \label{tab:movebank_studies}
\end{table}
\begin{table}[H]
    \footnotesize
    \centering
    \caption{Reference trajectories used to guide \mname{} in \mcref{sec:use_case_migration}.}
    \begin{tabular}{l c c}
    \toprule
    \textbf{ID} & \textbf{Year} & \textbf{\# Samples}\\
    \midrule
    Turkey vultures in North and South America\_Steamhouse 2 & 2012 & 13\\
    Migration of Sabine's gulls from the Canadian High Arctic\_BI & 2008 & 11\\
    Migration of Sabine's gulls from the Canadian High Arctic\_AD & 2011 & 13\\
    NYSDEC Raptor Tracking-argos\_BAEA 0629-30002 E63 & 2001 & 10\\
    \bottomrule
    \end{tabular}
    \label{tab:movebank_experience_points}
\end{table}

\section{Further Domain Adaptation Results}
\label{app:F-further-da-results}
\noindent We provide additional results to \mcref{fig:moons_da_results} from the domain adaptation experiment of \mcref{sec:use_case_da} in \scref{tab:rotating-moons}. In particular, we provide more rotation degrees, and Sinkhorn~\cite{cuturiSinkhornDistancesLightspeed2013} results at different regularization levels.
All methods use a barycentric mapping~\cite{DBLP:journals/siamis/FerradansPPA14,courtyOptimalTransportDomain2017} to compute the transport induced by their coupling.
\begin{table}[H]
  \centering
  \caption{Classification error [$\%$]\,$\downarrow$ on the \textit{Rotating Moons} domain adaptation task across target rotation angles. Higher angles generally correspond to more challenging shifts. Means over 20 repetitions. Values: mean $\pm$ standard deviation of error\%. \textbf{Bold} = best, \textit{italic} = second best per row.}
  \label{tab:rotating-moons}
  \resizebox{\textwidth}{!}{%
  \begin{tabular}{lccccccccc}
    \toprule
     & \multicolumn{5}{c}{Baselines} & \multicolumn{2}{c}{\textbf{Ours}} & \multicolumn{2}{c}{Ablations} \\
    \cmidrule(lr){2-6} \cmidrule(lr){7-8} \cmidrule(lr){9-10}
    Rotation & Classical & Sinkhorn & Sinkhorn & Sinkhorn & Laplace-OT & \mname{} & \mname{} & \mname{} & \mname{} \\
     & OT & {\small($\varepsilon=0.01$)} & {\small($\varepsilon=0.05$)} & {\small($\varepsilon=0.1$)} &  & {\small(Linear)} & {\small(RBF)} & {\small(Linear, $\xp{\gamma_\mathrm{rng}}$)} & {\small(RBF, $\xp{\gamma_\mathrm{rng}}$)} \\
    \midrule
    10$^\circ$ & $\mathbf{0.00_{\pm 0.00}}$ & $\mathbf{0.00_{\pm 0.00}}$ & $0.01_{\pm 0.02}$ & $0.34_{\pm 0.23}$ & $\mathbf{0.00_{\pm 0.00}}$ & $0.68_{\pm 0.96}$ & $\mathbf{0.00_{\pm 0.00}}$ & $1.87_{\pm 1.67}$ & $3.64_{\pm 3.04}$ \\
    20$^\circ$ & $3.34_{\pm 2.24}$ & $0.11_{\pm 0.21}$ & $\mathit{0.01_{\pm 0.03}}$ & $0.40_{\pm 0.50}$ & $0.65_{\pm 0.49}$ & $3.43_{\pm 1.93}$ & $\mathbf{0.00_{\pm 0.00}}$ & $9.31_{\pm 2.21}$ & $21.54_{\pm 5.15}$ \\
    30$^\circ$ & $9.01_{\pm 1.62}$ & $0.48_{\pm 0.46}$ & $\mathbf{0.00_{\pm 0.00}}$ & $0.42_{\pm 0.34}$ & $1.42_{\pm 0.91}$ & $7.57_{\pm 1.54}$ & $\mathit{0.05_{\pm 0.20}}$ & $15.84_{\pm 2.60}$ & $27.55_{\pm 5.01}$ \\
    40$^\circ$ & $15.01_{\pm 1.84}$ & $8.22_{\pm 1.33}$ & $\mathbf{0.03_{\pm 0.09}}$ & $0.94_{\pm 0.63}$ & $10.16_{\pm 2.21}$ & $10.97_{\pm 1.92}$ & $\mathit{0.16_{\pm 0.51}}$ & $23.01_{\pm 2.74}$ & $35.55_{\pm 6.80}$ \\
    50$^\circ$ & $19.90_{\pm 1.28}$ & $14.47_{\pm 0.81}$ & $\mathit{2.24_{\pm 1.15}}$ & $2.70_{\pm 1.24}$ & $16.16_{\pm 1.34}$ & $14.48_{\pm 1.86}$ & $\mathbf{0.43_{\pm 1.07}}$ & $29.03_{\pm 3.14}$ & $39.62_{\pm 7.57}$ \\
    60$^\circ$ & $26.09_{\pm 1.89}$ & $20.25_{\pm 1.02}$ & $11.13_{\pm 1.99}$ & $\mathit{4.29_{\pm 1.55}}$ & $22.99_{\pm 1.32}$ & $19.12_{\pm 1.59}$ & $\mathbf{0.88_{\pm 1.37}}$ & $34.62_{\pm 3.88}$ & $42.60_{\pm 8.34}$ \\
    70$^\circ$ & $34.25_{\pm 2.26}$ & $28.68_{\pm 2.06}$ & $20.45_{\pm 1.26}$ & $\mathit{8.02_{\pm 1.47}}$ & $30.38_{\pm 2.15}$ & $24.29_{\pm 2.06}$ & $\mathbf{1.89_{\pm 3.60}}$ & $39.39_{\pm 3.72}$ & $47.14_{\pm 6.99}$ \\
    80$^\circ$ & $41.12_{\pm 3.63}$ & $40.42_{\pm 2.54}$ & $27.66_{\pm 1.26}$ & $\mathit{15.87_{\pm 2.66}}$ & $42.08_{\pm 2.56}$ & $30.86_{\pm 2.65}$ & $\mathbf{3.78_{\pm 4.22}}$ & $44.88_{\pm 3.88}$ & $52.65_{\pm 9.21}$ \\
    90$^\circ$ & $49.48_{\pm 2.02}$ & $49.14_{\pm 2.00}$ & $36.45_{\pm 2.28}$ & $\mathit{27.37_{\pm 3.71}}$ & $51.30_{\pm 2.19}$ & $36.19_{\pm 2.05}$ & $\mathbf{6.15_{\pm 4.34}}$ & $51.13_{\pm 3.80}$ & $53.37_{\pm 9.72}$ \\
    \bottomrule
  \end{tabular}}

\end{table}

\noindent Although Sinkhorn with $\varepsilon\!=\!0.1$ performs better at higher rotation degrees, we chose Sinkhorn with $\varepsilon\!=\!0.05$ in the main text experiment as it provides a good overall trade-off between regularization and performance.

\end{document}